\documentclass{article}
\PassOptionsToPackage{numbers, compress}{natbib}
\usepackage[preprint]{neurips_2026}
\usepackage[utf8]{inputenc} % allow utf-8 input
\usepackage[T1]{fontenc}    % use 8-bit T1 fonts
\usepackage{amsmath}        % for math equations
\usepackage{amssymb}
\usepackage{hyperref}       % hyperlinks
\usepackage{url}            % simple URL 
\usepackage{amsthm}
\newtheorem{assumption}{Assumption}
\newtheorem{definition}{Definition}
\newtheorem{proposition}{Proposition}
\newtheorem{corollary}{Corollary}

\usepackage{booktabs}       % professional-quality tables
\usepackage{cleveref}
\usepackage{amsfonts}       % blackboard math symbols
\usepackage{nicefrac}       % compact symbols for 1/2, etc.
\usepackage{microtype}      % microtypography
\usepackage[table]{xcolor}  % colors and table row colors
\usepackage{graphicx}       % for figures
\usepackage{multirow} 
\usepackage{placeins}       % keep appendix floats within their section

\newtheorem{lemma}{Lemma}

\title{Mitigating Bias in Low-SNR Financial Reinforcement Learning via Quantum Representations}
\author{
\normalfont
Zeyu LIU\thanks{Equal contribution.}\quad
Xuanzhi FENG\footnotemark[1]\quad
LAI Sing Kwong\footnotemark[1]\quad
Yuanchen GAO\quad
Xiaoyi PANG\\
Hualei ZHANG\quad
Jingcai GUO\quad
Jie ZHANG\thanks{Corresponding authors.}\quad
Song GUO\footnotemark[2]\\
The Hong Kong University of Science and Technology\\
\texttt{zliugj@connect.ust.hk}\quad
\texttt{csejzhang@connect.ust.hk}\quad
\texttt{songguo@ust.hk}
}
\begin{document}

\maketitle
\begin{abstract}
The financial market is a typical low signal-to-noise ratio (SNR) setting, which often destabilizes off-policy maximum-entropy methods like Soft Actor-Critic (SAC). Specifically, noisy state representations may produce unreliable Q-value estimates, and bootstrapping amplifies these errors, forming a failure mode we call the ``Financial Entropy Trap''. In this paper, we propose \texttt{FPQC-SAC}, an efficient and plug-and-play SAC variant that places a compact and bounded Parameterized Quantum Circuit (PQC) before the actor and critic networks to constrain feature propagation at the representation level, rather than filtering raw inputs or regularizing Q-values after bootstrapping. Notably, \texttt{FPQC-SAC} reduces the impact of extreme market fluctuations on Bellman target estimation, while trainable quantum entanglement preserves flexible cross-asset interactions. Empirical evaluations on real-world portfolio management tasks demonstrate that FPQC-SAC substantially enhances out-of-sample stability and cumulative returns by achieving a 66.89\% relative gain in cumulative return over standard unconstrained SAC and outperforms the best continuous-control deep reinforcement learning baseline by $\sim$27\%. open-source code is available at \url{https://github.com/ZeyuLIU-UST/FPQC-SAC-main}.
\end{abstract}

% ------------------------------------------------------------------------
% 1. Introduction
% ------------------------------------------------------------------------

\section{Introduction}

\textbf{Financial markets are inherently low signal-to-noise ratio (SNR) environments.} Useful financial signals are often extremely weak, whereas short-term fluctuations, market microstructure noise, investor sentiment, and liquidity shocks are dominant. Prior studies have shown that financial prediction naturally suffers from these high noises, along with non-stationarity and distribution shifts. This makes models prone to fitting historical noise patterns, consequently suffering severe performance degradation in out-of-sample testing or real trading environments \cite{gu2020empirical, bao2017deep, bailey2017probability, zhang2025major}. Under such extreme uncertainty, sequential financial decision-making faces profound challenges.

At the core of this challenge lies sequential asset allocation: continuously determining when and how much to buy, hold, or sell. While traditional statistical models struggle to characterize complex market-state transitions, Deep Reinforcement Learning (DRL) provides a promising end-to-end framework \cite{liu2020finrl, liu2022finrlmeta, jiang2017deep, zhang2019deep, pippas2024evolution}. Among these, \textbf{maximum-entropy reinforcement learning}, represented by Soft Actor-Critic (SAC) \cite{haarnoja2018soft}, explicitly maximizes policy entropy to encourage stochastic exploration, aiming to maintain trading-policy diversity and generate high-quality decisions.

However, the Actor-Critic architecture becomes significantly fragile in noise-dominant markets. Entropy-driven stochastic exploration does not necessarily gather effective information; instead, it may amplify noise-dominated update directions. This prevents the policy from stably capturing weak predictive signals. Consequently, noise contaminates value estimation and guides erroneous exploration, forming a self-reinforcing loop. We refer to this critical failure mode of maximum-entropy RL in financial decision-making as the \textbf{\textit{Financial Entropy Trap}}.

To escape the Financial Entropy Trap, we first examine existing stabilization mechanisms. We consider two direct solutions: transferring general RL stabilization methods, and introducing explicit input filtering. Unfortunately, both paradigms exhibit inherent limitations:
\begin{itemize}
    \item \textbf{Output-side regularization intervenes too late.} Methods like clipped double-Q learning \cite{fujimoto2018addressing}, large-scale critic ensembles \cite{chen2021randomized}, and distributional critic truncation \cite{kuznetsov2020controlling} mainly act at the end of the update pipeline. Even contemporary advancements utilizing dropout Q-functions \cite{DroQ2022} or cross-Q updates \cite{CrossQ2024} focus primarily on stabilizing network-layer gradients during Bellman bootstrapping. Once market noise has passed through conventional encoders and contaminated intermediate representations \cite{lyle2024normalization, tran2026hybrid}, posterior corrections can only provide \textit{after-the-fact} mitigation. It becomes nearly impossible to recover weak predictive signals that have already been mixed or distorted.
    \item \textbf{Input-side filtering intervenes too rigidly.} Classical input-denoising pipelines, such as Kalman filtering, wavelet denoising, and Fourier-domain filtering, attempt to directly suppress noise before states enter the model \cite{abdulkadir2013unscented, alrumaih2002wavelet, song2021forecasting}. However, these methods usually rely on strong handcrafted priors, e.g., treating noise as measurement uncertainty or approximating it as high-frequency perturbations. Because useful financial signals themselves may also be transient and high-frequency, aggressive filtering often erases scarce valuable signals alongside the noise.
\end{itemize}
Neither approach fundamentally blocks the propagation of noise perturbations into value estimation while preserving weak financial signals.

To bridge this gap, we draw inspiration from a structurally relevant physical phenomenon in quantum optics \cite{kong2013cancellation}, where destructive interference partially cancels noise fluctuations without degrading the primary signal. This provides a crucial insight for low-SNR RL: effective noise suppression should not force a rigid choice between isolated pre-filtering or post-clipping, but rather introduce an incremental, structural constraint during signal representation. Inspired by this, we propose the \textbf{FPQC-SAC} architecture by embedding a Parameterized Quantum Circuit (PQC) at the representation front-end of SAC. As a plug-and-play enhancement, PQC goes beyond simply adding a new module; it leverages the natural interference structure induced by entangled evolution and measurement readout. This allows unstructured noise to statistically cancel within a high-dimensional representation space, effectively decoupling noise suppression from signal retention and overcoming the limitations of classical approaches.

The main contributions of this paper are summarized as follows:
\begin{itemize}
    \item \textbf{Identification of the Financial Entropy Trap and the Representation Gap.} We theoretically characterize a critical failure mode of maximum-entropy RL in low-SNR environments, demonstrating how stochastic exploration systematically amplifies market noise. Furthermore, we provide a principled analysis of why existing filtering and regularization methods fail to balance noise suppression with information preservation.
    \item \textbf{Proposal and Theoretical Grounding of the FPQC-SAC Architecture.} Moving beyond heuristic module combinations, we design FPQC-SAC by embedding a Parameterized Quantum Circuit at the SAC front-end. We theoretically motivate how quantum entanglement and interference inherently provide a structural constraint that dynamically cancels unstructured noise while preserving temporally correlated weak signals, offering a rigorous decoupling mechanism that traditional architectures lack.
    \item \textbf{Empirical validation and mechanism explanation.} Rigorous backtesting on three real-world datasets shows that FPQC-SAC improves performance over standard SAC by at least 50\% and outperforms the best baseline model by more than 20\%. Ablation studies further verify that this performance leap originates from the trainable quantum entanglement structure and trainable quantum rotation matrices.
\end{itemize}

% Methodology and Theoretical Foundations
% ------------------------------------------------------------------------

\section{Related Work}

\subsection{Financial RL under Low-SNR Market Dynamics}
In quantitative finance, continuous portfolio management and asset allocation are canonically formulated as Markov Decision Processes (MDPs). FinRL~\cite{liu2020finrl} provides an open-source DRL library for automated stock trading, and FinRL-Meta~\cite{liu2022finrlmeta} further expands this line into market environments and benchmarks for data-driven financial RL. Early portfolio-management work by Jiang et al.~\cite{jiang2017deep} formulates deep RL for financial portfolio allocation, while Zhang et al.~\cite{zhang2019deep} studies DRL-based trading in broader market settings. These applications commonly rely on standard continuous-control algorithms, including DDPG~\cite{lillicrap2020continuous}, TD3~\cite{fujimoto2018addressing}, and SAC~\cite{haarnoja2018soft}. As summarized in the survey by Pippas et al.~\cite{pippas2024evolution}, on-policy methods are often favored for stability in volatile markets, whereas off-policy methods offer stronger sample efficiency and exploration. Restoring the training stability of off-policy continuous control methods in financial environments therefore remains a critical challenge. However, existing financial RL pipelines typically treat the representation module as a standard neural encoder, leaving open how low-SNR market noise should be structurally constrained before value estimation.

\subsection{Value-Level Stabilization in Noisy Off-Policy Learning}
A primary vulnerability of off-policy Q-learning is overestimation bias, which arises when noisy Q-value estimates are amplified during Bellman bootstrapping. Fujimoto et al.~\cite{fujimoto2018addressing} analyze this function-approximation error and introduce TD3, which uses clipped double-Q learning to reduce overestimation in continuous control. SAC~\cite{haarnoja2018soft} instead stabilizes off-policy learning through a maximum-entropy objective that encourages stochastic exploration. Later value-level methods further modify the critic side: REDQ~\cite{chen2021randomized} uses randomized critic ensembles to improve sample efficiency, whereas TQC~\cite{kuznetsov2020controlling} truncates distributional quantile critics to control excessive value estimates.

However, when applied to financial time series characterized by inherently low signal-to-noise ratios (SNR), value-level corrections can be insufficient because noisy states may already have contaminated the learned representation before critic-side mitigation is applied. Lyle et al.~\cite{lyle2024normalization} show that normalization and effective learning rates can substantially affect RL update dynamics, and Tran et al.~\cite{tran2026hybrid} revisit overestimation bias and stable policy optimization in hybrid-action actor-critic learning. These studies highlight that critic stabilization is sensitive to the upstream representation and optimization pipeline. This motivates our focus on structural interventions at the geometric representation bottleneck level: value-level corrections mitigate the symptoms of noisy bootstrapping, but they do not prevent noisy financial states from entering the value function in the first place.

\subsection{Quantum Representations for Robust Feature Learning}
Data re-uploading enables PQCs to repeatedly encode classical inputs and act as expressive models for continuous variables, as shown by Pérez-Salinas et al.~\cite{perez2020data}. This expressivity has encouraged the use of Quantum Reinforcement Learning (QRL) in continuous-control domains. Acuto et al.~\cite{acuto2022variational} instantiate a variational quantum SAC architecture for robotic arm control. Hohenfeld et al.~\cite{hohenfeld2024quantum} and Lokossou et al.~\cite{lokossou2025quantum} evaluate quantum deep RL variants for robot-navigation tasks. On the algorithmic side, Jerbi et al.~\cite{jerbi2023quantum} develop quantum policy-gradient algorithms, while Xu and Aggarwal~\cite{xu2025accelerating} accelerate QRL with a quantum natural policy-gradient approach.

Nevertheless, most QRL explorations emphasize parameter efficiency, policy optimization, or sample complexity in physical-control simulators. In financial trading, Liu et al.~\cite{liu2025quantum_fintech} combine quantum-enhanced RL with LSTM forecasting signals, but the quantum component is still primarily used as a nonlinear feature map for decision making. In this work, we pivot the application of QRL: rather than primarily optimizing for parameter reduction, we use the PQC as a representation-level inductive bias for attenuating stochastic financial noise under low-SNR It\^o dynamics.

\subsection{PQC Expressivity, Trainability, and Geometric Bottlenecks}
The design of quantum neural networks is governed by a trade-off between expressivity and trainability. Caro et al.~\cite{caro2021encoding} derive encoding-dependent generalization bounds for PQCs, emphasizing that the data-encoding choice shapes generalization behavior. Martyniuk et al.~\cite{martyniuk2024quantum} survey quantum architecture search and discuss how circuit structure affects expressivity and trainability. Röseler et al.~\cite{roseler2026how} further study how to identify parameterized circuits that remain both expressible and trainable. In noisy financial environments, excessive expressivity can become a liability because the model may fit stochastic variance rather than weak predictive structure. Our architecture therefore adopts a shallow PQC and treats its restricted capacity as an intentional geometric bottleneck, regularizing the representation before actor-critic value estimation.

\section{Methodology and Theoretical Foundations}
\label{sec:methodology}

This section systematically establishes why the quantum representation layer provides a robust structural mechanism to mitigate Q-value overestimation within the Soft Actor-Critic (SAC) framework. We first formalize the extreme low Signal-to-Noise Ratio (SNR) dynamics of financial markets as stochastic differential equations (SDEs) governed by high-dimensional Brownian motion, clarifying the core representational difficulty. Crucially, we shift the paradigm from classical value-clipping to geometric interference. We introduce the network topology and progressively analyze the core structural effects of the Parameterized Quantum Circuit (PQC) bottleneck---hyperspherical phase encoding, entanglement-driven isotropic noise cancellation, and constructive signal alignment---demonstrating how this architecture intrinsically dissipates Temporal Difference (TD) target variance rather than merely enforcing hard boundaries.

\subsection{Low-SNR It\^o SDE and the Representation Challenge}
\label{sec:ito_sde}

In the continuous-time limit, the state variables or asset prices in financial markets can be modeled as multi-dimensional It\^o diffusion processes. 

\begin{assumption}[Low-SNR Local It\^o Transition]\label{ass:ito}
Let the state vector $X_t \in \mathbb{R}^d$ evolve according to the following stochastic differential equation (SDE):
\begin{equation}
    dX_t = \mu(X_t, A_t) dt + \sigma(X_t) dW_t
\end{equation}
where $\mu(\cdot)$ is the drift function carrying the directional predictive signal, $\sigma(\cdot)$ is the instantaneous volatility matrix, and $W_t$ is a $d$-dimensional standard Brownian motion. In the financial low-SNR regime, the critical characteristic is that the magnitude of the drift is vastly eclipsed by the diffusion term:
\begin{equation}
    \|\mu(X_t, A_t) dt\| \ll \|\sigma(X_t) dW_t\|
\end{equation}
\end{assumption}
Although the Brownian increment $dW_t$ possesses independent increments and a zero mean, its sample paths exhibit infinite variation over any arbitrarily small time interval. This implies that on the discrete timescales of algorithmic observation and policy updates, the spatial perturbation of state $X_t$ is mathematically unbounded in Euclidean space.

The direct consequence of this formulation is catastrophic for standard deep reinforcement learning. If unconstrained raw states are fed directly into classical Critic networks, the high-variance diffusion term is injected unmitigated into value function estimation. This causes the bootstrapped Bellman Temporal Difference (TD) targets to fluctuate violently. This mechanism is the representational root of the \textit{financial entropy trap}: noise inherently pollutes the state representation, distorts the Q-value gradients, and is progressively amplified by self-bootstrapping. We provide a mathematical formalization of the \textbf{Financial Entropy Trap} in \textbf{Appendix~\ref{app:entropy_trap}}.

\subsection{FPQC-SAC Architecture and End-to-End Workflow}
\label{sec:architecture}

To fundamentally resolve the Financial Entropy Trap, we must separate the weak drift $\mu$ from the dominating diffusion $\sigma dW_t$. Simply clipping the input or bounding the output is insufficient, as it rigidly squashes both noise and signal. Instead, we embed a Parameterized Quantum Circuit (PQC) as a shared representation bottleneck to naturally filter noise via complex hyperspherical interference. The end-to-end data flow is detailed in Figure~\ref{fig:FPQC-SAC_architecture}.

\begin{figure}[htbp]
  \centering
  \includegraphics[
    width=\linewidth,
     trim=0cm 0cm 0cm 0cm,
    clip
  ]{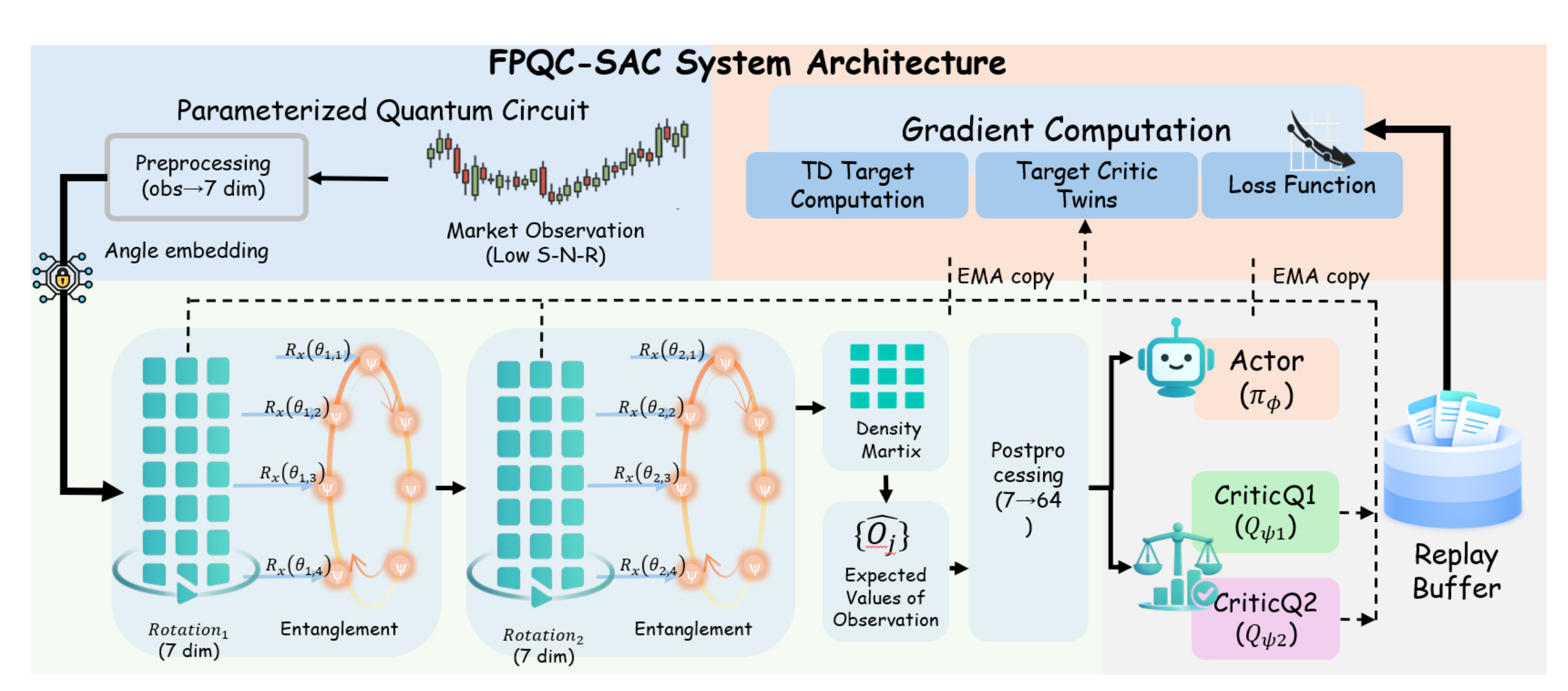}
    \caption{The system architecture and end-to-end workflow of FPQC-SAC. By forcing high-variance irregular Brownian paths through a 7-qubit, 2-layer circularly entangled unitary space, the architecture maps Euclidean noise onto a complex hypersphere. Here, isotropic noise undergoes destructive interference, naturally filtering extreme perturbations before they reach the classical Actor-Critic heads.}
  \label{fig:FPQC-SAC_architecture}
\end{figure}

\textbf{Hybrid Quantum-Classical Architecture and Hyperspherical Isolation}

The framework maps high-noise financial observations $(A_t, X_t)$ onto a stabilized latent manifold. The state space is initially compressed via a classical linear layer to a 7-dimensional vector, aligning with the qubit topology. These features are subsequently embedded into the quantum Hilbert space via continuous $R_x$ angle encoding. The unitary evolution $U(X, \theta)$ utilizes $L=2$ layers of circular CNOT entanglement and single-qubit rotations. The resulting latent representation $f_t$ is derived from the expectation values of Pauli-Z observables, as formalized in Definition~\ref{def:pqc_mapping}.

\begin{definition}[PQC Latent Mapping]\label{def:pqc_mapping}
Let $U(X, \theta) = \prod_{l=1}^L W_l(\theta_l) R_x(X)$ denote the parameterized quantum circuit. The latent representation $f(X, \theta) \in \mathbb{R}^k$ is extracted via the expected values of Pauli observables $\{\hat{O}_j\}_{j=1}^k$ following Born's rule:
\begin{equation}
    f_j(X, \theta) = \text{Tr}\left( \hat{O}_j U(X, \theta) |0\rangle\langle0| U^\dagger(X, \theta) \right).
\end{equation}
\end{definition}

In the reinforcement learning phase, the latent features $f_t$ serve as inputs for the stochastic Actor policy $\pi_\phi$ and the Twin Critics $Q_{\psi_{1,2}}$. Crucially, an Exponential Moving Average (EMA) copy of the PQC parameters ($\bar{\theta}$) is included within the target network cascade. This ensures that the Bellman bootstrapping process evaluates states strictly through the lens of the quantum interference bottleneck, preventing raw environmental variance from infiltrating the non-smooth $\min(Q_{\bar{\psi}_1}, Q_{\bar{\psi}_2})$ operator. Detailed explanations of this interference-driven noise suppression are provided in Sections~\ref{sec:phase_attenuation}, \ref{sec:entanglement_role}, and \ref{sec:geometric_cancellation}.

\subsection{Expectation Attenuation via Complex Phase Encoding}
\label{sec:phase_attenuation}

Before evaluating the quantum encoding, we must account for the classical Pre-Net. As formally proven in \textbf{Appendix~\ref{lemma:prenet_jacobian}}, passing the raw It\^o diffusion through a piecewise-linear, locally Lipschitz Pre-Net (e.g., ReLU) functionally preserves the local It\^o structure almost everywhere, mapping the effective noise covariance to $\Sigma_Z = J_h \Sigma_X J_h^\top$. The unbounded nature of the Brownian noise propagates into the latent space, setting the stage for PQC attenuation.

The first layer of the PQC's denoising mechanism originates from its fundamental operation of angle encoding (the $R_x$ gates shown in Stage 3), which maps classical real-valued inputs into complex exponential phases. Let the pre-processed latent state evolve according to the It\^o process $dZ_s = \mu_s ds + \sigma_s dW_s$ over the time interval $s \in [t, t+\Delta t]$.

\begin{proposition}[Diffusion Attenuation in Periodic Complex Features]
\label{prop:ito_attenuation}
Given an initial latent state $Z_t$, and assuming the drift $\mu_s$ and diffusion covariance $\Sigma_s = \sigma_s \sigma_s^\top$ are locally stable over $[t, t+\Delta t]$, encoding the It\^o diffusion into a continuous unitary generator mode $\omega$ yields the following expected evolution of the complex phase:
\begin{equation}
\label{eq:ito_phase}
\mathbb{E}[e^{i\omega^\top Z_{t+\Delta t}}] = e^{i\omega^\top Z_t} \cdot \underbrace{\exp\left(i \int_t^{t+\Delta t} \omega^\top \mu_s ds \right)}_{\text{Signal Phase Translation}} \cdot \underbrace{\exp\left(-\frac{1}{2} \int_t^{t+\Delta t} \omega^\top \Sigma_s \omega ds \right)}_{\text{Diffusion Attenuation}}.
\end{equation}
\end{proposition}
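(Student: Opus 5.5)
The plan is to compute the characteristic function of the increment $Z_{t+\Delta t}-Z_t$ under the stated local-stability assumption. The key step is to treat $\mu_s$ and $\sigma_s$ as deterministic on $[t,t+\Delta t]$, conditional on $\mathcal{F}_t$ (frozen at their time-$t$ values or given deterministic functions of $s$). We will interpret $\mathbb{E}$ as $\mathbb{E}[\,\cdot\mid Z_t]$.

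Under this reading, integrating the SDE gives
\begin{equation*}
Z_{t+\Delta t}=Z_t+\int_t^{t+\Delta t}\mu_s\,ds+\int_t^{t+\Delta t}\sigma_s\,dW_s .
\end{equation*}
The first integral is deterministic. The second is a Wiener integral of a deterministic integrand, so it is Gaussian with mean zero and covariance $\int_t^{t+\Delta t}\Sigma_s\,ds$ by the It\^o isometry. Projecting onto $\omega$, the scalar $\xi=\omega^\top\int\sigma_s\,dW_s$ is therefore $\mathcal{N}\!\left(0,\int\omega^\top\Sigma_s\omega\,ds\right)$. Factoring the exponential gives $e^{i\omega^\top Z_t}\,e^{i\omega^\top\int\mu_s ds}\,\mathbb{E}[e^{i\xi}]$. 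The Gaussian characteristic function $\mathbb{E}[e^{i\xi}]=e^{-\frac12\operatorname{Var}\xi}$ then yields exactly \eqref{eq:ito_phase}.

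As a cross-check that does not rely on Gaussianity explicitly, we can apply It\^o's formula to $g(Z_s)=e^{i\omega^\top Z_s}$. This gives
\begin{equation*}
dg=g\left(i\omega^\top\mu_s-\tfrac12\omega^\top\Sigma_s\omega\right)ds+i\,g\,\omega^\top\sigma_s\,dW_s .
\end{equation*}
Since $|g|=1$ and $\sigma_s$ is bounded on the interval, the stochastic integral is a true martingale. Taking conditional expectations therefore gives a linear ODE for $m(s)=\mathbb{E}[g(Z_s)\mid Z_t]$:
\begin{equation*}
m'(s)=\left(i\omega^\top\mu_s-\tfrac12\omega^\top\Sigma_s\omega\right)m(s),\qquad m(t)=e^{i\omega^\top Z_t}.
\end{equation*}
When the coefficients are deterministic, solving this ODE reproduces the same product formula. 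Since the Pre-Net result (Lemma in Appendix~\ref{lemma:prenet_jacobian}) gives $Z$ an It\^o form with $\Sigma_Z=J_h\Sigma_XJ_h^\top$, we may take $\Sigma_s=\Sigma_Z$ locally.

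The main obstacle is justifying that the coefficients may be treated as deterministic. If $\mu_s$ and $\sigma_s$ depend on $Z_s$, the expectation does not factor: $\mathbb{E}[g\,\mu_s]\neq\mathbb{E}[g]\,\mu_s$. The formula is then exact only with frozen coefficients and otherwise holds up to $o(\Delta t)$ corrections. We will state the local-stability hypothesis precisely as ``$\mu_s,\sigma_s$ are $\mathcal{F}_t$-measurable (constant) on $[t,t+\Delta t]$,'' which makes the identity exact. We will also note that for the $R_x$ encoding, $\omega$ ranges over the frequency vectors of the circuit's Fourier spectrum, so each Pauli-$Z$ expectation is a linear combination of such terms and inherits the same attenuation.
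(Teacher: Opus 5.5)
Your proposal is correct. Your cross-check is exactly the paper's proof: apply It\^o's formula to $e^{i\omega^\top Z_s}$, drop the stochastic integral in expectation as a martingale, and solve the resulting linear ODE.

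Your primary route is genuinely different. You integrate the SDE and note that, conditionally on $\mathcal{F}_t$ with frozen coefficients, $\omega^\top\int_t^{t+\Delta t}\sigma_s\,dW_s$ is a centred Gaussian with variance $\int_t^{t+\Delta t}\omega^\top\Sigma_s\omega\,ds$. The attenuation factor is then just its characteristic function. This is shorter, and it shows that the ``diffusion attenuation'' is nothing more than the modulus of a Gaussian characteristic function. Its cost is that it is tied to Gaussianity of the increment and does not extend once the coefficients depend on $Z_s$.

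The generator/ODE route degrades more gracefully. Via Dynkin's formula it gives the $o(\Delta t)$ error you mention, indeed $O(\Delta t^2)$ for smooth coefficients. The paper's version, however, implicitly uses $\mathbb{E}[g(Z_s)\,c_s]=c_s\,\mathbb{E}[g(Z_s)]$ for the drift coefficient $c_s=i\omega^\top\mu_s-\tfrac12\omega^\top\Sigma_s\omega$ when it calls the ODE deterministic. It also asserts the martingale property without justification. Your explicit frozen-coefficient hypothesis and your bound $|g|=1$ with bounded $\sigma_s$ are exactly what legitimize those two steps. Your remark that each Pauli-$Z$ readout is a finite trigonometric sum over the $R_x$ frequency spectrum, so that the attenuation passes to $f_j$ by linearity, also links the proposition to the actual circuit, which the paper's proof does not do.

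One small adjustment: condition on $\mathcal{F}_t$ (or on the raw state $X_t$) rather than on $Z_t$ alone. In this architecture the latent coefficients $J_h(X_t)\mu_X$ and $J_h(X_t)\sigma_X$ are functions of the $64$-dimensional $X_t$, which $Z_t=h_{\mathrm{pre}}(X_t)$ does not determine. Conditioning as $\mathbb{E}[\,\cdot\mid Z_t]$ would therefore average the exponential factors over the fibre of $h_{\mathrm{pre}}$, rather than return the single product in the statement.
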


This analytical equation (detailed proof provided in \textbf{Appendix~\ref{app:proof_prop1}}) mathematically decouples two distinctly divergent effects:
\begin{itemize}
    \item \textbf{Signal Preservation:} The predictive drift term $\mu_s$ is perfectly preserved as a directional phase translation. The directional information of the causal signal survives cleanly within the argument of the complex exponential.
    \item \textbf{Diffusion Attenuation:} Due to its independent increments and zero mean, the stochastic diffusion $\sigma_s dW_s$ algebraically induces an exponential decay factor weighted by the covariance $\Sigma_s$ under expectation. Intuitively, the isotropic random jumps of Brownian motion are ``rotationally averaged'' across the complex plane---random phase oscillations in all directions mutually cancel out in expectation.
\end{itemize}

The core mathematical insight here is that the complex exponential mapping naturally separates the deterministic trend of the signal from the random diffusion of the noise in an expected sense. Signals are retained as phase shifts, while noise is mathematically suppressed as an attenuation factor.
\subsection{The Core Role of Entanglement: Synergistic Signal-to-Noise Separation}
\label{sec:entanglement_role}

The true filtering capability of the PQC does not stem merely from its finite output range, but from how entanglement structurally discriminates between noise and signal within the hypersphere. This functionality manifests in two complementary dimensions:

\textbf{Scattering Noise into Isotropic Dimensions (Destructive Interference).} 
If the PQC solely relied on complex phase encoding without inter-qubit interactions, noise would remain isolated within single-qubit subspaces. Introducing the circular CNOT chain instantaneously diffuses local, independent Brownian perturbations into a collective, multi-body entangled state. Because the financial diffusion term $\sigma dW_t$ is stochastic and largely unstructured, entanglement scrambles these errors into a high-dimensional, isotropic phase perturbation. Due to the projective nature of Pauli measurements (the trace operation), these uniformly scattered random phases destructively interfere and statistically cancel each other out upon readout. Classical sine/cosine bases cannot replicate this synergistic, high-dimensional cancellation.

\textbf{Aligning Signals along Principal Directions (Constructive Interference).} 
While noise is scattered isotropically, the weak drift signal $\mu dt$ possesses intrinsic temporal and cross-asset correlations (e.g., subtle momentum continuations). Under the tuning of trainable rotation gates, the circuit does not scatter these correlated phases uniformly. Instead, it dynamically rotates the hyperspherical basis to align these coherent phase evolutions along specific Pauli measurement directions. Ablation results in Table~\ref{tab:core_ablation} demonstrate that removing CNOT gates or freezing rotations destroys performance. Simply put, entanglement treats noise and signal asymmetrically: it scrambles isotropic noise into destructive cancellation, while combing the structural signal into a focused, constructively interfering output. For a detailed mathematical analysis of these entangled modes, see Appendix~\ref{app:entanglement_modes}.

\subsection{Geometric Consequence: Variance Attenuation via Hyperspherical Cancellation}
\label{sec:geometric_cancellation}

To establish a mathematically rigorous safeguard for off-policy learning, we formalize the consequence of the aforementioned phase interference. While classical bottlenecks (like Tanh) merely clip extreme values---leaving the variance dense at the boundaries---the PQC fundamentally attenuates the variance of the underlying stochastic process.

\begin{proposition}[Hyperspherical Phase Cancellation and Variance Attenuation]
\label{prop:interference_attenuation}
Under unitary evolution, the state $\rho$ is confined to a norm-preserving complex hypersphere. When subjected to the unbounded local It\^o diffusion $dW_t$, the spatial variance is mapped to phase fluctuations. During the expected measurement over Pauli observables, isotropic phase variations lacking structural correlation undergo destructive interference. Consequently, for an input spatial perturbation with variance $\Sigma_X$, the variance of the resulting PQC representation, $\text{Var}(f_{\text{PQC}})$, is exponentially attenuated relative to the intensity of the isotropic noise, completely decoupling the output stability from the infinite-variation nature of the raw SDE.
\end{proposition}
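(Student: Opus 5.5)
The plan is to turn the qualitative statement into a precise claim about a finite Fourier expansion, and then reduce it mode by mode to Proposition~\ref{prop:ito_attenuation}.

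\textbf{Step 1: Fourier structure of the readout.} Because the angle encoding uses $R_x(Z)=\prod_q e^{-iZ_q X_q/2}$ and the trainable layers $W_l(\theta_l)$ (rotations plus the circular CNOT chain) do not depend on $Z$, each output in Definition~\ref{def:pqc_mapping} is a trigonometric polynomial:
\begin{equation}
f_j(Z,\theta)=\sum_{\omega\in\Omega} c_{\omega}^{(j)}(\theta)\, e^{i\omega^\top Z}, \qquad \Omega\subseteq\{-1,0,1\}^{7},
\end{equation}
with $c_{-\omega}=\overline{c_\omega}$. Since $\|\hat O_j\|_{\mathrm{op}}=1$, we have $|f_j|\le 1$, and Parseval over the torus gives $\sum_\omega |c^{(j)}_\omega|^2\le 1$. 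The entangling layer enters only through the coefficients: the CNOT ring spreads weight over multi-qubit frequencies $\omega$ with large $\|\omega\|$. I would record this as the mechanism behind the ``scattering'' in Section~\ref{sec:entanglement_role}.

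\textbf{Step 2: Mode-wise damping of the mean.} Condition on $Z_t$ and apply Proposition~\ref{prop:ito_attenuation} to each mode. This gives
\begin{equation}
\mathbb{E}[f_j(Z_{t+\Delta t})\mid Z_t]=c_0^{(j)}+\sum_{\omega\neq 0} c_\omega^{(j)} e^{i\omega^\top Z_t} e^{i\omega^\top\mu\Delta t} e^{-\frac12 \omega^\top\Sigma\omega\,\Delta t},
\end{equation}
with $\Sigma=J_h\Sigma_X J_h^\top$ by the Pre-Net lemma. Under isotropic noise $\Sigma=s^2 I$, every non-constant mode is damped by the factor $e^{-s^2\|\omega\|^2\Delta t/2}$. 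This is the precise form of ``exponential attenuation relative to the intensity of the isotropic noise.'' The drift survives only as a phase, so a structured signal direction keeps its argument, while the noise shrinks only the modulus.

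\textbf{Step 3: Variance.} The product $f_j^2$ is again a trigonometric polynomial, now with frequencies $\omega-\omega'$. Applying Proposition~\ref{prop:ito_attenuation} to $f_j^2$ and subtracting the square of the Step~2 expression gives
\begin{equation}
\mathrm{Var}(f_j\mid Z_t)=\sum_{\omega,\omega'} c_\omega\overline{c_{\omega'}}\, e^{i(\omega-\omega')^\top (Z_t+\mu\Delta t)}\Big[e^{-\frac12(\omega-\omega')^\top\Sigma(\omega-\omega')\Delta t}-e^{-\frac12(\omega^\top\Sigma\omega+\omega'^\top\Sigma\omega')\Delta t}\Big].
\end{equation}
From this I would derive two bounds:
\begin{itemize}
\item a uniform bound $\mathrm{Var}(f_j)\le 1-|c_0^{(j)}|^2$, independent of $\Sigma_X$ and of the path regularity of $W$;
\item the conclusion that, in the isotropic regime, the $Z_t$-dependent part of the variance (the cross terms with $\omega\neq\omega'$) is damped by factors of the form $e^{-c\,s^2\Delta t}$.
\end{itemize}
Together these give the ``decoupling from the infinite-variation nature of the SDE.''

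\textbf{Main obstacle.} The hardest step is the honest interpretation of Step~3. As $s\to\infty$, the variance does not go to zero. It converges to $\sum_{\omega\ne0}|c_\omega|^2$, the variance under uniformly distributed angles. So the statement can be proved in two forms:
\begin{itemize}
\item as exponential attenuation of the noise-induced deviation of the conditional mean and of the state-dependent variance component;
\item as a variance bounded uniformly in $\Sigma_X$.
\end{itemize}
It cannot be proved as literal vanishing of the variance. I would first try to make the statement tight by bounding the cross terms via $\min_{\omega\ne\omega'}\|\omega-\omega'\|\ge1$. I would then state the theorem in this corrected, quantitative form, rather than claiming that entanglement alone produces cancellation beyond what the Fourier damping gives.
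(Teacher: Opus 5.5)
Your analysis is correct, and it goes further than the paper, which contains no proof of this proposition. The paper's support is informal. It consists of Proposition~\ref{prop:ito_attenuation}, under which each Fourier mode of the conditional \emph{mean} is damped by $e^{-\frac12\omega^\top\Sigma\omega\,\Delta t}$. It adds the joint-mode remark of Appendix~\ref{app:entanglement_modes}: under $\Sigma_Z=\sigma^2 I$ the entangled modes $e_j\pm e_k$ carry exponent $2\sigma^2$. Finally, Appendix~\ref{app:snr_bound} uses the bounded readout $f_j\in[-1,1]$ with Popoviciu's inequality, and that bound is all the later arguments (Section~\ref{sec:td_bound}, Appendix~\ref{app:snr_bound}) actually need. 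Your Steps 1--2 and your uniform bound are these same ingredients in a cleaner Fourier form.

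The genuinely new step is Step 3. You apply Proposition~\ref{prop:ito_attenuation} to $f_j^2$ and compute the conditional variance exactly, which the paper never does, and your formula is correct. It shows that the paper transfers the damping of $\mathbb{E}[e^{i\omega^\top Z}]$ to $\mathrm{Var}(f)$ without justification, and that the literal claim is false. Take one qubit with the $\langle Z\rangle$ readout after $R_x(Z)$, i.e.\ $f(Z)=\cos Z$, and $Z_{t+\Delta t}\sim\mathcal{N}(0,s^2\Delta t)$. Your formula gives $\mathrm{Var}(f)=\frac12\bigl(1-e^{-s^2\Delta t}\bigr)^2$, which \emph{increases} with the noise intensity and saturates at $\frac12=\sum_{\omega\neq0}|c_\omega|^2$. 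Even relative to the input variance, the decay is only $\Theta(s^{-2})$, not exponential.

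Your corrected statement is the strongest true version: exponential damping of the non-constant modes of the conditional mean and of the $Z_t$-dependent cross terms, plus a $\Sigma_X$-independent ceiling. You are right not to claim more. It also shows that the PQC output variance saturates just as a Tanh bottleneck's does, so the paper's claimed advantage over classical bounds is not established at the level of variance.

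There are three smaller points.

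\textbf{The refined uniform bound.} For the single-encoding circuit of your Step 1 and a Pauli observable, $c_0^{(j)}=2^{-7}\operatorname{Tr}\hat O_j=0$, because the torus average of $R_x(z)|0\rangle\langle0|R_x(z)^\dagger$ is $I/2$. Your uniform bound is therefore just Popoviciu's $\mathrm{Var}(f_j)\le1$. The refined form $1-|c_0|^2$ should not be asserted for general norm-one observables. For $I-2|0^7\rangle\langle0^7|$ one has $1-c_0^2\approx0.03$, while an input law close to $\mathcal{N}(\pi/2,1)\otimes\delta_0^{\otimes6}$ gives variance $\approx0.43$.

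\textbf{Hypotheses on the latent $Z$.} Isotropy and Gaussianity are hypotheses on $Z$, not consequences of hypotheses on $X$. The $7\times64$ ReLU Jacobian need not map isotropic $\Sigma_X$ to isotropic $\Sigma_Z$. In general the damping rate of mode $\omega$ is $\omega^\top\Sigma_Z\omega\ge\lambda_{\min}(\Sigma_Z)\|\omega\|^2$, and $\lambda_{\min}$ vanishes whenever a unit is inactive. Once the noise is large enough to cross ReLU kinks, each $Z_q$ acquires an atom at $0$, so the modes no longer damp to zero. Your $s\to\infty$ limits are therefore statements about the latent diffusion of Proposition~\ref{prop:ito_attenuation}. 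Only the uniform bound holds without these hypotheses.

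\textbf{Entanglement.} Your Step 1 remark about entanglement can be made exact. Expand $W^\dagger\hat O_jW=\sum_Pa_PP$ with $W=\prod_lW_l(\theta_l)$. By Parseval the ceiling equals $\sum_{P\neq I}a_P^2\,2^{-\mathrm{wt}(P)}$ over Pauli strings $P\in\{I,Y,Z\}^{\otimes7}$. Moving weight onto high-weight strings therefore lowers the ceiling. But the ceiling does not depend on $s$, so this is still not attenuation in the noise intensity.
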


Because this attenuation derives geometrically from the trace operation over an entangled complex manifold, it provides a superior structural guarantee compared to classical bounds. 

\begin{corollary}[Stabilized Representation Cascade]
\label{cor:hybrid_attenuation}
In the hybrid FPQC-SAC architecture, even if the classical Pre-Net amplifies the raw market noise, the subsequent projection through the PQC bottleneck forces the isotropic components to destructively cancel. Assuming a locally Lipschitz Post-Net, the final representation perturbation $\|f(X) - f(\bar{X})\|_2$ exhibits a tightly concentrated variance, immune to the catastrophic Brownian spikes that bypass standard Euclidean neural encoders.
\end{corollary}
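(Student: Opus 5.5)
The plan is to treat the cascade as a composition $f = g \circ \mathrm{PQC} \circ h$. Here $h$ is the piecewise-linear Pre-Net, the PQC map is the one in Definition~\ref{def:pqc_mapping}, and $g$ is the locally Lipschitz Post-Net with constant $L_g$. The statement will be reduced to a variance bound on the PQC output, which is then pushed through $g$.

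\textbf{Step 1 (Pre-Net).} By Appendix~\ref{lemma:prenet_jacobian}, almost everywhere the latent $Z = h(X)$ is again a local It\^o process with covariance $\Sigma_Z = J_h \Sigma_X J_h^\top$. If $h$ amplifies the noise, this only enlarges $\Sigma_Z$; no boundedness of $J_h$ is needed downstream.

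\textbf{Step 2 (PQC as a trigonometric polynomial).} Each feature $f_j$ is an expectation of a Pauli observable after $R_x$ encoding of the 7 components of $Z$, followed by $L=2$ layers of fixed rotations and CNOTs. It can therefore be written as a finite Fourier series
\begin{equation*}
f_j(Z,\theta)=\sum_{\omega\in\Omega} c_{j,\omega}(\theta)\,e^{i\omega^\top Z},
\end{equation*}
where the frequency set satisfies $\Omega\subset\{-1,-\tfrac12,0,\tfrac12,1\}^7$ (half-angle spectrum) and $\sum_\omega |c_{j,\omega}|\le \|\hat O_j\|=1$. The CNOT layer enters only through the coefficients $c_{j,\omega}$. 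Taking expectations over $Z_{t+\Delta t}$ and applying Proposition~\ref{prop:ito_attenuation} to each mode shows that every nonzero frequency is multiplied by the factor $\exp(-\tfrac12\int\omega^\top\Sigma_s\omega\,ds)$.

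\textbf{Step 3 (variance).} Write
\begin{equation*}
\mathrm{Var}(f_j)=\mathbb{E}|f_j|^2-|\mathbb{E}f_j|^2,
\end{equation*}
and note that $|f_j|^2$ is again a trigonometric polynomial with frequencies in $\Omega-\Omega$. Two consequences follow. First, $\mathrm{Var}(f_j)\le 1$ uniformly, since $|f_j|\le 1$; this gives the immunity to unbounded Brownian spikes. Second, in the isotropic case $\Sigma_s\succeq \lambda I$, each cross term $\omega\ne\omega'$ decays like $e^{-\lambda\Delta t\|\omega-\omega'\|^2/2}$. As a result, $\mathrm{Var}(f_j)$ converges exponentially fast in $\lambda\Delta t$ to the constant-mode residual $\sum_\omega|c_{j,\omega}|^2-|c_{j,0}|^2$.

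This is the honest form of ``exponential attenuation''. Only the fluctuation of the output around its noise-averaged value concentrates; the variance itself does not vanish. I expect this to be the main obstacle. Proposition~\ref{prop:interference_attenuation} is only informal, so I would need to state precisely which quantity decays: either the deviation of the conditional mean from the signal term, or a variance bounded by $1$ together with exponentially small mode-mixing terms. I would also have to handle the case where $\Sigma_Z$ is degenerate along directions in which $J_h$ collapses, which is where the isotropy assumption is needed.

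\textbf{Step 4 (Post-Net).} Local Lipschitzness of $g$ on the compact cube $[-1,1]^k$ containing all PQC outputs makes $L_g$ a global constant there. Hence
\begin{equation*}
\|f(X)-f(\bar X)\|_2\le L_g\,\|\mathrm{PQC}(h(X))-\mathrm{PQC}(h(\bar X))\|_2\le 2L_g\sqrt{k}
\end{equation*}
deterministically. Squaring and taking expectations gives a variance bound of at most $L_g^2\sum_j\mathrm{Var}(f_j)$ plus the squared mean difference. Combining this with Step 3 yields a bound that does not depend on $\|J_h\|$ or on the magnitude of $\Sigma_X$, which is the asserted concentration.
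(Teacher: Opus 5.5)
Your argument is correct where it carries weight, but it follows a different route from the paper. The paper gives no standalone proof of this corollary. It presents it as an immediate consequence of Proposition~\ref{prop:interference_attenuation}'s asserted exponential variance attenuation, which is itself unproved, composed with a Lipschitz Post-Net. The only rigorous ingredient it supplies anywhere is the bounded Pauli readout plus Popoviciu, $\mathrm{Var}(\phi_j)\le 1$, and that appears only later in Appendix~\ref{app:snr_bound}.

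You rest the conclusion on exactly that ingredient, together with Lipschitz continuity of the Post-Net on the compact cube $[-1,1]^k$. This gives the pathwise bound $\|f(X)-f(\bar X)\|_2\le 2L_g\sqrt{k}$, independent of $J_h$ and $\Sigma_X$, which is the defensible content of the statement. Your Fourier computation then shows that Proposition~\ref{prop:interference_attenuation}, read literally, fails. As $\lambda\Delta t\to\infty$ the variance converges to $\sum_{\omega\neq 0}|c_{j,\omega}|^2$, the variance of $f_j$ under the uniform law on the torus, rather than tending to zero.

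So your route buys a true, assumption-light bound, at the price of replacing ``exponential attenuation'' with a uniform ceiling. The paper's route claims a noise-dependent suppression that its ingredients do not deliver. Your sentence about ``the fluctuation around its noise-averaged value'' is muddled, since that fluctuation is the variance. In the write-up, say precisely what decays exponentially: it is $|\mathbb{E}f_j-c_{j,0}|$, the mean's dependence on $Z_t$ and on the drift. By Proposition~\ref{prop:ito_attenuation} the drift phase sits in those same attenuated modes, so this is not selective noise removal.

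One auxiliary claim in Step 2 is false, although nothing depends on it: $\sum_\omega|c_{j,\omega}|\le\|\hat O_j\|$ does not survive conjugation by an entangling unitary. Here is a two-qubit counterexample. Take $|\psi(Z)\rangle=R_x(Z_1)\otimes R_x(Z_2)|00\rangle$. There is a Hermitian $M$ with $M^2=I$ and $\mathrm{Tr}\,M=0$, hence $M=W^\dagger(\sigma_z\otimes I)W$ for some unitary $W$, such that $\langle\psi(Z)|M|\psi(Z)\rangle=\frac{1}{\sqrt5}(\cos Z_1+\cos Z_2)+\frac{5+\sqrt5}{10}\sin Z_1\sin Z_2$. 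The Fourier coefficients of this function have $\ell^1$ norm $(1+\sqrt5)/2>1$.

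Replace the claim by Parseval, $\sum_\omega|c_{j,\omega}|^2\le 1$, and use $(\sum_\omega|c_{j,\omega}|)^2\le|\Omega|$ as the constant in your cross-term estimates. With a single $R_x$ encoding the spectrum is in fact $\{-1,0,1\}^7$, not half-integer, so Parseval holds directly on $[0,2\pi)^7$.
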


In this context, the quantum bottleneck acts not as a simple hard-clipping wall, but as an active representational dissipater that absorbs and neutralizes high-variance unstructured noise.

\subsection{Structural Regularization of TD Targets via Variance Dissipation}
\label{sec:td_bound}

We now connect this geometric cancellation to the most noise-sensitive component of SAC: the Temporal Difference (TD) target. The vulnerability of standard maximum-entropy RL lies in the operator $y_t = r_t + \gamma \max_{a'} Q(X_{t+1}, a')$. When $X_{t+1}$ contains unmitigated Brownian noise, the maximization over noisy representations systematically induces a positive bias, inflating the Q-values.

In our framework, the bootstrapped target is computed as:
\begin{equation}
\label{eq:sac_td_target}
y_t = r_t + \gamma \mathcal{B} \left( f(X_{t+1}) \right),
\end{equation}
where $\mathcal{B}$ denotes the locally Lipschitz soft Bellman target map. 

Because $f(X_{t+1})$ has passed through the PQC, its isotropic noise components have been dissipated via destructive interference (Proposition~\ref{prop:interference_attenuation}). Consequently, the variance of the input to the Bellman operator is drastically suppressed: $\text{Var}(f(X_{t+1})) \ll \text{Var}(X_{t+1})$. 

This means the amplitude of the spurious spikes entering $\mathcal{B}(\cdot)$ is structurally minimized. Without extreme positive noise spikes, the $\max$ (or soft-max) operator has no artificial high values to exploit. Thus, FPQC-SAC systematically starves the overestimation bias of its necessary fuel (high-variance perturbations). By dissipating the noise \textit{before} Bellman bootstrapping rather than trying to regularize the Q-values \textit{after} the fact, the architecture successfully bypasses the amplification channel of the Financial Entropy Trap.

\subsection{Closing the Loop: Theoretical Mechanisms Meet Empirical Validation}
\label{sec:theory_meets_exp}

The mechanisms elucidated in Sections~\ref{sec:phase_attenuation}--\ref{sec:td_bound}---complex hyperspherical encoding, entanglement-driven destructive/constructive interference, and variance dissipation---are structurally unified. Phase encoding converts spatial noise to phase variation, entanglement scrambles it isotropically, and quantum measurement neutralizes it while preserving the coherently aligned drift signals. 

To rigorously guarantee that this interference-based filtering does not squash causal signals alongside the noise, \textbf{Appendix~\ref{app:snr_bound}} derives a theoretical lower bound for the Output Signal-to-Noise Ratio ($\text{SNR}_{\text{out}}$). We mathematically demonstrate that while unstructured noise variance is aggressively dissipated, the sensitivity to deterministic structural signals remains strictly bounded away from zero. 

This mathematically justified mechanism formulates a verifiable hypothesis: the PQC acts as an interference-driven filter that separates signal from noise, thus stabilizing value estimation. Therefore, Section~\ref{sec:experiments} evaluates FPQC-SAC through a complete empirical causal chain:
\[
\begin{array}{c}
\text{Interference-driven signal-preserving bottleneck}\\
\text{(Sections~\ref{sec:phase_attenuation}--\ref{sec:td_bound})}\\[0.35em]
\downarrow\\[0.35em]
\text{Suppressed TD variance and minimal Q-gap}\\
\text{(Section~\ref{sec:q_value_bottleneck})}\\[0.35em]
\downarrow\\[0.35em]
\text{Superior out-of-sample trading performance}\\
\text{(Section~\ref{sec:trading_performance})}.
\end{array}
\]
Specifically, we first test whether the PQC bottleneck can suppress TD-target variance and reduce the Q-gap without destroying structural information (outperforming simple classical bounded activation like Tanh). Subsequently, we evaluate whether this intrinsic Bellman stability directly translates into superior out-of-sample trading returns. Through this closed loop of theoretical mechanisms, algorithmic stability, and final financial performance, the superiority of the hyperspherical cancellation paradigm is thoroughly validated.

\section{Experiments}
\label{sec:experiments}
\subsection{Experimental Setup}
\label{sec:experimental_setup}
We evaluate FPQC-SAC on three non-overlapping U.S. multi-asset portfolios, covering 18 distinct assets across benchmark, defensive, and high-volatility regimes. The first portfolio (AAPL, AMZN, GOOGL, MSFT, QQQ, SPY) follows the canonical FinRL portfolio-allocation setting~\cite{liu2020finrl}, providing a direct comparison with the standard financial RL benchmark rather than a post-hoc asset selection. To test whether the observed gains extend beyond this benchmark regime, we further construct two supplementary six-asset portfolios: a stable blue-chip portfolio (BAC, JNJ, JPM, PG, WMT, XOM) with persistent signals and lower volatility, and a high-growth technology portfolio (AVGO, BRK.B, META, NFLX, NVDA, TSLA) with concentrated growth exposure and elevated regime sensitivity. This design preserves the six-asset evaluation scale commonly used in FinRL-style portfolio allocation while extending the evaluation across distinct market conditions. All DRL agents are trained from 2013-01-02 to 2018-12-31 for 25,000 environment steps and tested out-of-sample from 2019-01-02 to 2021-08-31, covering the COVID-19 shock and recovery. Metrics are reported as mean$\pm$std over independent seeds using a 10\% two-sided trimmed mean, removing the two highest and two lowest cumulative-return seeds from 20 runs to improve robustness against outlier runs~\cite{mair2020robust}. Detailed parameters are provided in Appendix~\ref{appendix:repro_details}.
Baselines are divided into three categories: (1) classic continuous-control DRL algorithms (SAC, TD3, PPO, DDPG, A2C); (2) output-regularized robust DRL methods for mitigating Q-value overestimation (REDQ, TQC); (3) rule-based trading strategies, including MACD, SMA, KDJ-RSI, and ZMR.

\subsection{Q-Value Overestimation Mitigation and Bottleneck Analysis}
\label{sec:q_value_bottleneck}

In this section, we respond to the theoretical hypothesis proposed in Section~\ref{sec:theory_meets_exp} by testing whether the PQC representation constraint can suppress TD-target variance and reduce the Q-gap. During the out-of-sample testing phase, we measured the deviation between the learned Critic values and the Monte Carlo soft returns (Q-gap), alongside Temporal Difference variance (TD-var) and Critic-loss variance (C-loss-var). Furthermore, to definitively rule out the alternative hypothesis that "any structural representation bottleneck could achieve similar stabilizing effects," we systematically substituted the PQC with a suite of classical representation designs. These include bounded activations (Tanh), feature clipping (Clipped Latent), dimensionality compression (Linear and MLP Bottlenecks), generic regularization (Weight Decay), and normalization techniques (LayerNorm and SpectralNorm). Each bottleneck is embedded within the identical Actor-Critic backbone and trained under the exact same protocol.
\begin{table}[htbp]
  \centering
  \caption{Systematic comparison of classical representation bottlenecks and FPQC-SAC on return-stability metrics.}
  \label{tab:bottleneck_comparison}
  \resizebox{\textwidth}{!}{
  \begin{tabular}{lcccccccc}
    \toprule
    \textbf{Model} & \textbf{CR\%$\uparrow$} & \textbf{AR\%$\uparrow$} & \textbf{SR$\uparrow$} & \textbf{Sortino$\uparrow$} & \textbf{Calmar$\uparrow$} & \textbf{Q-gap$\downarrow$} & \textbf{TD-var$\downarrow$} & \textbf{C-loss-var$\downarrow$} \\
    \midrule
    Linear Bottleneck \citep{pearson1901lines} & 98.9$\pm$60.1 & 27.8$\pm$16.8 & 1.18$\pm$0.63 & 1.31$\pm$0.75 & 1.06$\pm$0.56 & -0.553$\pm$2.16 & $4.16{\times}10^3\pm7.49{\times}10^3$ & $4.71{\times}10^5\pm1.20{\times}10^6$ \\
    SAC (No Bottleneck) \citep{haarnoja2018soft} & 117.0$\pm$18.6 & 33.7$\pm$4.29 & 1.40$\pm$0.12 & 1.59$\pm$0.19 & 1.11$\pm$0.23 & 26.4$\pm$81.8 & $2.55{\times}10^7\pm1.01{\times}10^8$ & $1.32{\times}10^{13}\pm4.78{\times}10^{13}$ \\
    Weight Decay Bot. \citep{krogh1991simple} & 125.0$\pm$33.4 & 35.4$\pm$7.54 & 1.43$\pm$0.19 & 1.71$\pm$0.26 & 1.31$\pm$0.32 & -0.148$\pm$0.30 & 23.6$\pm$31.6 & 157$\pm$277 \\
    MLP Bottleneck \citep{hornik1989multilayer,hinton2006reducing} & 136.0$\pm$29.2 & 37.9$\pm$6.38 & 1.53$\pm$0.18 & 1.77$\pm$0.30 & 1.31$\pm$0.33 & -0.047$\pm$0.14 & 25.1$\pm$35.1 & 140$\pm$334 \\
    SpectralNorm Bot. \citep{miyato2018spectral} & 144.0$\pm$65.2 & 38.4$\pm$16.4 & 1.44$\pm$0.57 & 1.73$\pm$0.71 & 1.38$\pm$0.60 & $1.92{\times}10^{12}\pm8.73{\times}10^{12}$ & $2.91{\times}10^{27}\pm1.30{\times}10^{28}$ & $2.99{\times}10^{57}\pm1.34{\times}10^{58}$ \\
    Tanh Bottleneck \citep{lecun2002efficient} & 154.0$\pm$30.4 & 41.6$\pm$6.35 & 1.57$\pm$0.16 & 1.88$\pm$0.20 & 1.46$\pm$0.25 & -0.004$\pm$0.05 & 0.81$\pm$1.02 & 14.8$\pm$28.4 \\
    Clipped Latent \citep{choi2018pact,xu2017feature} & 155.0$\pm$30.1 & 41.9$\pm$6.20 & 1.55$\pm$0.21 & 1.91$\pm$0.19 & 1.54$\pm$0.20 & -0.007$\pm$0.05 & 0.93$\pm$1.65 & 18.7$\pm$46.6 \\
    LayerNorm Bot. \citep{ba2016layer} & 170.0$\pm$36.8 & 44.9$\pm$7.52 & 1.63$\pm$0.21 & 2.00$\pm$0.26 & 1.64$\pm$0.29 & -0.002$\pm$0.05 & 0.76$\pm$0.74 & 8.99$\pm$14.3 \\
    \midrule
    \textbf{FPQC-SAC (Ours)} & \textbf{195.3$\pm$26.9} & \textbf{50.0$\pm$5.15} & \textbf{1.71$\pm$0.15} & \textbf{2.13$\pm$0.19} & \textbf{1.75$\pm$0.20} & \textbf{0.0004$\pm$0.05} & \textbf{0.54$\pm$0.69} & \textbf{8.19$\pm$13.7} \\
    \bottomrule
  \end{tabular}
  }
\end{table}
Table~\ref{tab:bottleneck_comparison} reveals key findings. Unconstrained SAC suffers catastrophic Q-value explosions and extreme estimation variance, which distort policy gradients and mislead the actor. To address this, our analysis focuses fundamentally on the role of bounded features. Bounded constraints themselves are not rare. Classical operations, e.g., tanh activations, gradient clipping, and LayerNorm can also impose range restrictions on feature spaces. While these classical bottlenecks exhibit a clear return-stability trade-off---with LayerNorm and Tanh variants achieving moderate returns and partial stability gains---they cannot fully suppress noise while retaining weak predictive signals. Furthermore, we note that some ill-designed representation constraints exhibit extremely high estimation variance, which is primarily attributed to catastrophic training seed explosions.

Only PQC simultaneously delivers high returns and stability. Crucially, addressing our theoretical propositions from Section~\ref{sec:methodology}, FPQC-SAC achieves the lowest TD-variance (0.54), Q-gap (0.0004), and Critic-loss-variance (8.19) across all tested architectures. It significantly outperforms the most competitive classical bounded baselines, further reducing the TD-variance compared to LayerNorm (0.76) and Tanh (0.81), while achieving a much tighter Q-gap approximation than Clipped Latent (0.007). These stability advantages seamlessly translate into performance, allowing FPQC-SAC to reach a 195.3\% cumulative return, comfortably outperforming the runner-up LayerNorm Bottleneck (170.0\%). Ultimately, PQC's unique combination of unitary-constrained bounded readout and entanglement-induced phase cancellation creates a stable architecture not easily replicated by conventional designs. For more detailed variance, explained-variance, Q-gap, and TD-var analyses of representative spectral/filtering encoders, please see Appendix~\ref{appendix:encoder_diagnostics}.

\subsection{Trading Performance Comparison}
\label{sec:trading_performance}
Having shown that the PQC bottleneck stabilizes value estimation in Section~\ref{sec:q_value_bottleneck}, we now test whether this stability translates into out-of-sample trading performance. Our evaluation uses one benchmark experiment and two regime-extension experiments.

For the \textbf{Main Experiment}, we use the canonical FinRL~\cite{liu2020finrl} 6-asset portfolio (AAPL, AMZN, GOOGL, MSFT, QQQ, SPY), which combines large-cap technology stocks with broad market ETFs. We extend the original FinRL test horizon to cover the COVID-19 shock and recovery, and compare FPQC-SAC against both standard continuous-control baselines and the SAC family, including SAC, REDQ, TQC, DroQ, and CrossQ.

To test generalization beyond the FinRL benchmark, we add two non-overlapping six-asset portfolios:

\textbf{Sub-experiment 1 (Defensive Blue-chip)} uses BAC, JNJ, JPM, PG, WMT, and XOM to represent a lower-volatility regime with more persistent signals and stronger downside-risk considerations.

\textbf{Sub-experiment 2 (High-Volatility Growth)} uses NVDA, META, AVGO, TSLA, NFLX, and BRK.B to stress-test models under elevated volatility, growth concentration, and severe stochastic interference. Together, the three portfolios cover 18 distinct assets across benchmark, defensive, and high-volatility regimes. For all experiments, metrics follow the 10\% trimmed-mean protocol over 20 random seeds.
\begin{table}[htbp]
  \centering
  \caption{Main experiment on the Mainstream Tech \& Market-Index Portfolio (QQQ, SPY, MSFT, GOOGL, AMZN, AAPL). Red marks the best result and blue marks the second-best.}
  \label{tab:main_experiment}
  \small
  \setlength{\tabcolsep}{4pt}
  \resizebox{\textwidth}{!}{
  \begin{tabular}{llccccc}
    \toprule
    \textbf{Category} & \textbf{Model} 
    & \textbf{CR\%$\uparrow$} & \textbf{AR\%$\uparrow$} & \textbf{SR$\uparrow$} & \textbf{Sortino$\uparrow$} & \textbf{Calmar$\uparrow$} \\
    \midrule

    Market & B\&H 
    & 88.57$\pm$0.00 & 26.90$\pm$0.00 & 1.18$\pm$0.00 & 1.28$\pm$0.00 & 0.80$\pm$0.00 \\
    \midrule

    Our Model & \textbf{FPQC-SAC}
    & \color{red}\textbf{195.32$\pm$26.94} & \color{red}\textbf{50.05$\pm$5.15} & \color{red}\textbf{1.71$\pm$0.15} & \color{red}\textbf{2.13$\pm$0.19} & \color{red}\textbf{1.75$\pm$0.20} \\
    \midrule

    \multirow{9}{*}{DRL-based}
    & SAC~\cite{haarnoja2018soft}
    & 117.04$\pm$18.56 & 33.67$\pm$4.29 & 1.40$\pm$0.12 & 1.59$\pm$0.19 & 1.11$\pm$0.23 \\

    & CrossQ~\cite{CrossQ2024}
    & 148.19$\pm$38.63 & 40.32$\pm$8.21 & 1.54$\pm$0.26 & 1.87$\pm$0.34 & 1.53$\pm$0.36 \\

    & DroQ~\cite{DroQ2022}
    & 152.80$\pm$21.61 & 41.55$\pm$4.51 & 1.56$\pm$0.13 & 1.91$\pm$0.15 & 1.56$\pm$0.18 \\

    & REDQ~\cite{chen2021randomized}
    & 147.87$\pm$23.48 & 40.48$\pm$5.16 & 1.57$\pm$0.14 & 1.85$\pm$0.20 & 1.44$\pm$0.24 \\

    & TQC~\cite{kuznetsov2020controlling}
    & 145.10$\pm$23.24 & 39.89$\pm$5.01 & 1.57$\pm$0.11 & 1.82$\pm$0.18 & 1.38$\pm$0.24 \\

    & TD3~\cite{fujimoto2018addressing}
    & 151.55$\pm$29.20 & 41.20$\pm$6.12 & 1.57$\pm$0.12 & 1.87$\pm$0.21 & 1.46$\pm$0.29 \\

    & DDPG~\cite{lillicrap2020continuous}
    & \color{blue}153.65$\pm$27.25 & \color{blue}41.67$\pm$5.68 & \color{blue}1.60$\pm$0.12 & \color{blue}1.90$\pm$0.20 & \color{blue}1.48$\pm$0.28 \\

    & PPO~\cite{schulman2017proximal}
    & 151.94$\pm$27.50 & 41.31$\pm$5.72 & 1.56$\pm$0.15 & 1.89$\pm$0.20 & 1.48$\pm$0.21 \\

    & A2C~\cite{mnih2016asynchronous}
    & 145.76$\pm$21.32 & 40.06$\pm$4.57 & 1.51$\pm$0.14 & 1.84$\pm$0.16 & 1.46$\pm$0.20 \\
    \midrule

    \multirow{4}{*}{Rule-based}
    & MACD & 14.55$\pm$0.00 & 5.23$\pm$0.00 & 0.39$\pm$0.00 & 0.38$\pm$0.00 & 0.28$\pm$0.00 \\
    & SMA & 49.47$\pm$0.00 & 16.29$\pm$0.00 & 0.87$\pm$0.00 & 0.83$\pm$0.00 & 0.70$\pm$0.00 \\
    & KDJ\_RSI & 0.86$\pm$0.00 & 0.32$\pm$0.00 & 0.74$\pm$0.00 & 0.17$\pm$0.00 & 0.58$\pm$0.00 \\
    & ZMR & 43.12$\pm$0.00 & 14.41$\pm$0.00 & 0.82$\pm$0.00 & 0.85$\pm$0.00 & 0.52$\pm$0.00 \\
    \bottomrule
  \end{tabular}
  }
\end{table}

\begin{table}[htbp]
  \centering
  \caption{Generalization experiments across defensive blue-chip and high-volatility growth portfolios. Red marks the best result and blue marks the second-best.}
  \label{tab:generalization_experiments}
  \small
  \setlength{\tabcolsep}{3pt}
  \resizebox{\textwidth}{!}{
  \begin{tabular}{llccccc|ccccc}
    \toprule
    \multirow{2}{*}{\textbf{Category}} 
    & \multirow{2}{*}{\textbf{Model}} 
    & \multicolumn{5}{c|}{\textbf{Defensive Blue-chip Portfolio (Sub-experiment 1)}} 
    & \multicolumn{5}{c}{\textbf{High-Volatility Growth Portfolio (Sub-experiment 2)}} \\
    \cmidrule(lr){3-7} \cmidrule(lr){8-12}
    & & CR\%$\uparrow$ & AR\%$\uparrow$ & SR$\uparrow$ & Sortino$\uparrow$ & Calmar$\uparrow$
      & CR\%$\uparrow$ & AR\%$\uparrow$ & SR$\uparrow$ & Sortino$\uparrow$ & Calmar$\uparrow$ \\
    \midrule

    Our Model & \textbf{FPQC-SAC}
    & \color{blue}45.01$\pm$13.58 & \color{blue}14.86$\pm$4.20 & \color{red}\textbf{0.62$\pm$0.21} & \color{red}\textbf{0.81$\pm$0.28} & \color{red}\textbf{0.64$\pm$0.33}
    & \color{red}\textbf{183.26$\pm$75.58} & \color{red}\textbf{46.77$\pm$14.20} & \color{red}\textbf{1.37$\pm$0.30} & \color{red}\textbf{1.76$\pm$0.39} & \color{red}\textbf{1.35$\pm$0.36} \\
    \midrule

    \multirow{7}{*}{Off-policy DRL}
    & SAC~\cite{haarnoja2018soft}
    & 27.25$\pm$24.62 & 8.97$\pm$8.45 & 0.41$\pm$0.35 & 0.51$\pm$0.44 & 0.33$\pm$0.29 
    & 133.33$\pm$54.18 & 36.72$\pm$11.37 & 1.14$\pm$0.33 & 1.46$\pm$0.39 & 1.10$\pm$0.29 \\

    & CrossQ~\cite{CrossQ2024}
    & \color{red}\textbf{47.15$\pm$6.11} & \color{red}\textbf{15.59$\pm$1.79} & \color{blue}0.62$\pm$0.11 & \color{blue}0.76$\pm$0.13 & \color{blue}0.50$\pm$0.13
    & 87.31$\pm$28.66 & 26.26$\pm$7.13 & 1.02$\pm$0.22 & 1.20$\pm$0.28 & 0.87$\pm$0.26 \\

    & DroQ~\cite{DroQ2022}
    & 31.71$\pm$7.42 & 10.86$\pm$2.33 & 0.50$\pm$0.09 & 0.60$\pm$0.12 & 0.35$\pm$0.07
    & 111.37$\pm$40.43 & 31.93$\pm$9.43 & 1.11$\pm$0.30 & 1.37$\pm$0.34 & 1.01$\pm$0.25 \\

    & REDQ~\cite{chen2021randomized}
    & 41.47$\pm$15.48 & 13.75$\pm$4.96 & 0.58$\pm$0.22 & 0.74$\pm$0.28 & 0.49$\pm$0.22
    & 136.65$\pm$82.74 & 38.20$\pm$16.31 & 1.10$\pm$0.30 & 1.50$\pm$0.37 & 1.10$\pm$0.27 \\

    & TQC~\cite{kuznetsov2020controlling}
    & 30.02$\pm$31.07 & 9.56$\pm$10.74 & 0.44$\pm$0.41 & 0.54$\pm$0.54 & 0.39$\pm$0.38
    & 118.14$\pm$69.60 & 32.58$\pm$15.75 & 1.15$\pm$0.43 & 1.42$\pm$0.54 & 1.02$\pm$0.41 \\

    & TD3~\cite{fujimoto2018addressing}
    & 33.68$\pm$11.91 & 11.42$\pm$3.77 & 0.51$\pm$0.18 & 0.62$\pm$0.21 & 0.35$\pm$0.14
    & \color{blue}154.99$\pm$58.16 & \color{blue}41.36$\pm$11.78 & \color{blue}1.36$\pm$0.28 & \color{blue}1.70$\pm$0.35 & \color{blue}1.25$\pm$0.26 \\

    & DDPG~\cite{lillicrap2020continuous} 
    & 35.85$\pm$11.70 & 12.10$\pm$3.63 & 0.55$\pm$0.19 & 0.67$\pm$0.23 & 0.41$\pm$0.20
    & 144.49$\pm$39.61 & 39.45$\pm$9.00 & 1.30$\pm$0.25 & 1.64$\pm$0.30 & 1.21$\pm$0.21 \\
    \midrule

    \multirow{4}{*}{Rule-based}
    & MACD
    & 4.11$\pm$0.00 & 1.52$\pm$0.00 & 0.25$\pm$0.00 & 0.29$\pm$0.00 & 0.11$\pm$0.00
    & 22.16$\pm$0.00 & 7.81$\pm$0.00 & 0.83$\pm$0.00 & 0.95$\pm$0.00 & 0.71$\pm$0.00 \\
    
    & SMA
    & 5.16$\pm$0.00 & 1.91$\pm$0.00 & 0.30$\pm$0.00 & 0.29$\pm$0.00 & 0.14$\pm$0.00
    & 37.34$\pm$0.00 & 12.65$\pm$0.00 & 0.99$\pm$0.00 & 0.97$\pm$0.00 & 0.80$\pm$0.00 \\
    
    & KDJ\_RSI
    & -1.82$\pm$0.00 & -0.69$\pm$0.00 & -0.12$\pm$0.00 & -0.16$\pm$0.00 & -0.05$\pm$0.00
    & 1.09$\pm$0.00 & 0.41$\pm$0.00 & 0.10$\pm$0.00 & 0.08$\pm$0.00 & 0.06$\pm$0.00 \\
    
    & ZMR
    & 10.90$\pm$0.00 & 3.96$\pm$0.00 & 0.24$\pm$0.00 & 0.28$\pm$0.00 & 0.13$\pm$0.00
    & 66.79$\pm$0.00 & 21.18$\pm$0.00 & 1.10$\pm$0.00 & 1.21$\pm$0.00 & 0.73$\pm$0.00 \\
    \bottomrule
  \end{tabular}
}
\end{table}

Tables \ref{tab:main_experiment} and \ref{tab:generalization_experiments} demonstrate that FPQC-SAC consistently dominates across varied market regimes, explicitly highlighting the structural vulnerabilities of standard SAC improvements. In the \textbf{Main Experiment}, contemporary baselines successfully improve sample efficiency and stability over vanilla SAC, with DroQ~\cite{DroQ2022} (152.80\% CR) strongly competing with prior algorithms. Nevertheless, FPQC-SAC maintains a substantial 42-percentage-point lead (\textbf{195.32\%} CR). This confirms that weak-signal-preserving representation filtering is fundamentally more effective than downstream heuristic regularizations in standard market conditions.

This divergence becomes highly nuanced in the generalization tests. In the defensive \textbf{Sub-experiment 1}, where noise is low and trends are persistent, standard off-policy algorithms typically struggle. Here, CrossQ~\cite{CrossQ2024} adapts exceptionally well, leveraging its cross-Q updates to fit weak trends and achieve the highest absolute return (\textbf{47.15\%} CR). However, this emphasizes a classic risk-reward trade-off: while CrossQ extracts slightly higher absolute profits, FPQC-SAC trades off a marginal amount of return to retain significantly superior downside protection, dominating the risk-adjusted metrics with the highest Sortino (\textbf{0.81}) and Calmar (\textbf{0.64}) ratios. This validates the PQC bottleneck's inherent capability to restrict tail risks even in calm markets.

Most critically, \textbf{Sub-experiment 2} illustrates the \textit{Financial Entropy Trap}. In this highly volatile portfolio, extreme stochastic noise severely penalizes traditional continuous control architectures. Despite their theoretical stability in standard benchmarks, both DroQ and CrossQ suffer severe structural collapse here, dropping to 111.37\% and 87.31\% CR respectively, completely underperforming vanilla SAC (133.33\% CR). This exposes a vulnerability: network-layer perturbations---such as dropout Q-functions~\cite{DroQ2022} or cross-Q updates~\cite{CrossQ2024}---can inadvertently amplify gradient variance when exposed to extreme out-of-distribution market noise. In stark contrast, FPQC-SAC anchors policy updates via its path-wise bounded quantum readout, mitigating the extreme variance and achieving \textbf{183.26\%} CR. These results validate our core hypothesis: while downstream regularizations provide stability and efficiency in standard settings, embedding a hard structural constraint at the representation level is the only mechanism that provides universal robustness across all financial regimes.

\subsection{Ablations and Analysis}
\label{sec:ablations}
This section conducts ablation experiments to identify the core components driving performance gains. The fully trainable FPQC-SAC is benchmarked against two degraded variants: \textbf{Frozen-PQC FPQC-SAC}, which freezes all trainable rotation parameters while preserving the CNOT topology, and \textbf{No-CNOT FPQC-SAC}, which removes all CNOT connections while retaining trainable single-qubit rotations. We further vary PQC depth across \(L \in \{1,2,3,4,5\}\) to explore the influence of circuit expressivity.

\begin{table}[t]
  \centering
  \caption{Ablation studies of FPQC-SAC. Red/blue denote best/second best.}
  \label{tab:core_ablation}
  \small
  \setlength{\tabcolsep}{3pt}
  \resizebox{\textwidth}{!}{
  \begin{tabular}{lccccc}
    \toprule
    \textbf{Model} & CR\%$\uparrow$ & AR\%$\uparrow$ & SR$\uparrow$ & Sortino$\uparrow$ & Calmar$\uparrow$ \\
    \midrule
    \multicolumn{6}{c}{\textbf{Core Structural Ablation}} \\
    \midrule
    \textbf{Trainable}
    & \color{red}\textbf{195.32$\pm$26.94}
    & \color{red}\textbf{50.05$\pm$5.15}
    & \color{red}\textbf{1.71$\pm$0.15}
    & \color{red}\textbf{2.13$\pm$0.19}
    & \color{red}\textbf{1.75$\pm$0.20} \\

    No-CNOT
    & \color{blue}148.49$\pm$69.60
    & \color{blue}40.37$\pm$15.49
    & 1.45$\pm$0.38
    & \color{blue}1.86$\pm$0.52
    & \color{blue}1.54$\pm$0.43 \\

    Frozen-PQC
    & 144.51$\pm$20.21
    & 39.76$\pm$4.31
    & \color{blue}1.51$\pm$0.10
    & 1.81$\pm$0.16
    & 1.39$\pm$0.19 \\
    \midrule
    \multicolumn{6}{c}{\textbf{PQC Depth Ablation}} \\
    \midrule
    L1
    & 150.90$\pm$22.09
    & 41.06$\pm$4.75
    & 1.48$\pm$0.15
    & 1.87$\pm$0.18
    & 1.52$\pm$0.21 \\

    \textbf{L2}
    & \color{red}\textbf{195.32$\pm$26.94}
    & \color{red}\textbf{50.05$\pm$5.15}
    & \color{red}\textbf{1.71$\pm$0.15}
    & \color{red}\textbf{2.13$\pm$0.19}
    & \color{red}\textbf{1.75$\pm$0.20} \\

    L3
    & 152.18$\pm$21.83
    & 41.33$\pm$4.68
    & 1.51$\pm$0.12
    & 1.90$\pm$0.17
    & 1.53$\pm$0.20 \\

    L4
    & \color{blue}164.85$\pm$37.17
    & \color{blue}43.73$\pm$7.52
    & 1.54$\pm$0.19
    & \color{blue}1.94$\pm$0.23
    & \color{blue}1.60$\pm$0.29 \\

    L5
    & 151.48$\pm$27.71
    & 41.15$\pm$5.97
    & \color{blue}1.54$\pm$0.17
    & 1.89$\pm$0.22
    & 1.48$\pm$0.22 \\
    \bottomrule
  \end{tabular}
  }
\end{table}

Table~\ref{tab:core_ablation} shows disabling either parameter trainability or quantum entanglement markedly degrades performance: the cumulative return falls from 195.32\% to 144.51\% for Frozen-PQC and 148.49\% for No-CNOT. The No-CNOT variant also presents substantially higher seed-level variance, indicating entanglement boosts both trading returns and training stability. These outcomes confirm the performance gain stems not from fixed bounded projection or isolated single-qubit rotations alone, but from the joint effect of trainable rotations and entangled circuit topology.

Table~\ref{tab:core_ablation} further shows \(L=2\) yields the best overall performance: \(L=1\) underperforms, while \(L\ge3\) causes consistent degradation. This indicates FPQC-SAC benefits from a shallow yet expressive quantum bottleneck, with sufficient capacity to model useful market structure without overfitting stochastic fluctuations. Combined with the representation analysis in Appendix~\ref{appendix:encoder_diagnostics} and the classical bottleneck comparisons in Table~\ref{tab:bottleneck_comparison}, these ablations confirm FPQC-SAC's improvement stems from its specific shallow, trainable, entangled PQC design---not generic dimensionality reduction or bounded feature compression.

\section{Conclusion}
This paper addresses a theoretical vulnerability when applying maximum-entropy off-policy reinforcement learning to low-SNR financial environments. We identify the ``Financial Entropy Trap'', where maximum-entropy RL can amplify low-SNR noise into systematic Q-value overestimation, and establish that a bounded quantum representation acts as a geometric constraint under local Lipschitz assumptions. Based on this insight, we propose FPQC-SAC, an architectural innovation that relies on a shallow Parameterized Quantum Circuit (PQC) as a structural information bottleneck to map noisy inputs into periodic complex phases for statistical attenuation while leveraging circular entanglement to induce destructive interference of unstructured noise and constructively align weak predictive signals within a bounded unitary space. Extensive empirical evaluations validate this design: FPQC-SAC achieves a 66.89\% relative gain in cumulative return over standard SAC, outperforms the strongest continuous-control DRL baseline by $\sim$27\%, and yields the lowest TD-target variance (0.54). Ultimately, these results underscore a promising paradigm shift: utilizing quantum circuits not merely for computational speedup, but as robust geometric constraints to mitigate representation vulnerabilities in highly stochastic domains.

\bibliographystyle{abbrvnat} 
\bibliography{refer}

@InProceedings{CrossQ2024,
  author       = {Aditya Bhatt and Daniel Palenicek and Boris Belousov and Max Argus and Artemij Amiranashvili and Thomas Brox and Jan Peters},
  title        = {CrossQ: Batch Normalization in Deep Reinforcement Learning for Greater Sample Efficiency and Simplicity},
  booktitle    = {International Conference on Learning Representations (ICLR)},
  year         = {2024},
  url          = {https://proceedings.iclr.cc/paper_files/paper/2024/hash/f381114cf5aba4e45552869863deaaa7-Abstract-Conference.html}
}

@InProceedings{DroQ2022,
  author       = {Takuya Hiraoka and Takahisa Imagawa and Taisei Hashimoto and Takashi Onishi and Yoshimasa Tsuruoka},
  title        = {Dropout Q-Functions for Doubly Efficient Reinforcement Learning},
  booktitle    = {International Conference on Learning Representations (ICLR)},
  year         = {2022},
  url          = {https://openreview.net/forum?id=xCVJMsPv3RT}
}

@article{gu2020empirical,
  title   = {Empirical Asset Pricing via Machine Learning},
  author  = {Gu, Shihao and Kelly, Bryan and Xiu, Dacheng},
  journal = {The Review of Financial Studies},
  volume  = {33},
  number  = {5},
  pages   = {2223--2273},
  year    = {2020},
  doi     = {10.1093/rfs/hhaa009}
}

@article{kong2013cancellation,
  title     = {Cancellation of Internal Quantum Noise of an Amplifier by Quantum Correlation},
  author    = {Kong, Jia and Hudelist, F. and Ou, Z. Y. and Zhang, Weiping},
  journal   = {Physical Review Letters},
  volume    = {111},
  number    = {3},
  pages     = {033608},
  year      = {2013},
  month     = jul,
  publisher = {American Physical Society},
  doi       = {10.1103/PhysRevLett.111.033608}
}

@incollection{lecun2002efficient,
  title={Efficient BackProp},
  author={LeCun, Yann and Bottou, L{\'e}on and Orr, Genevieve B and M{\"u}ller, Klaus-Robert},
  booktitle={Neural Networks: Tricks of the Trade},
  series={Lecture Notes in Computer Science},
  volume={7700},
  pages={9--48},
  year={2012},
  publisher={Springer},
  doi={10.1007/978-3-642-35289-8_3}
}

@article{lillicrap2020continuous,
  title={Continuous Control with Deep Reinforcement Learning},
  author={Lillicrap, Timothy P. and Hunt, Jonathan J. and Pritzel, Alexander and Heess, Nicolas and Erez, Tom and Tassa, Yuval and Silver, David and Wierstra, Daan},
  journal={arXiv preprint arXiv:1509.02971},
  year={2015}
}

@article{bao2017deep,
  title   = {A Deep Learning Framework for Financial Time Series Using Stacked Autoencoders and Long-Short Term Memory},
  author  = {Wei Bao and Jun Yue and Yulei Rao},
  journal = {PLOS ONE},
  volume  = {12},
  number  = {7},
  pages   = {e0180944},
  year    = {2017},
  doi     = {10.1371/journal.pone.0180944}
}

@article{bailey2017probability,
  title   = {The Probability of Backtest Overfitting},
  author  = {Bailey, David H. and Borwein, Jonathan M. and L{\'o}pez de Prado, Marcos and Zhu, Qiji Jim},
  journal = {Journal of Computational Finance},
  volume  = {20},
  number  = {4},
  pages   = {39--69},
  year    = {2017},
  doi     = {10.21314/JCF.2016.322}
}

@inproceedings{abdulkadir2013unscented,
  title     = {Unscented Kalman Filter for Noisy Multivariate Financial Time-Series Data},
  author    = {Said Jadid Abdulkadir and Suet-Peng Yong},
  booktitle = {Multi-disciplinary Trends in Artificial Intelligence},
  series    = {Lecture Notes in Computer Science},
  volume    = {8271},
  pages     = {87--96},
  year      = {2013},
  publisher = {Springer},
  doi       = {10.1007/978-3-642-44949-9_9}
}

@article{alrumaih2002wavelet,
  title   = {Time Series Forecasting Using Wavelet Denoising: An Application to Saudi Stock Index},
  author  = {Rumaih M. Alrumaih and Mohammad A. Al-Fawzan},
  journal = {Journal of King Saud University: Engineering Sciences},
  volume  = {14},
  number  = {2},
  pages   = {221--233},
  year    = {2002},
  doi     = {10.1016/S1018-3639(18)30755-4}
}

@article{song2021forecasting,
  title={Forecasting Stock Market Indices Using Padding-Based Fourier Transform Denoising and Time Series Deep Learning Models},
  author={Song, Donghwan and Baek, Adrian Matias Chung and Kim, Namhun},
  journal={IEEE Access},
  volume={9},
  pages={83786--83796},
  year={2021},
  publisher={IEEE},
  doi={10.1109/ACCESS.2021.3086537}
}

@article{zhang2025major,
  title={Major Issues in High-Frequency Financial Data Analysis: A Survey of Solutions},
  author={Zhang, Lu and Hua, Lei},
  journal={Mathematics},
  volume={13},
  number={3},
  pages={347},
  year={2025},
  publisher={MDPI},
  doi={10.3390/math13030347}
}

@inproceedings{liu2020finrl,
  title={FinRL: A deep reinforcement learning library for automated stock trading in quantitative finance},
  author={Liu, Xiao-Yang and Yang, Hongyang and Chen, Qian and Zhang, Runjia and Yang, Liuqing and Xiao, Bowen and Wang, Christina Dan},
  booktitle={Deep RL Workshop, NeurIPS 2020},
  year={2020},
  url={https://arxiv.org/abs/2011.09607}
}

@inproceedings{liu2022finrlmeta,
  title={FinRL-Meta: Market Environments and Benchmarks for Data-Driven Financial Reinforcement Learning},
  author={Liu, Xiao-Yang and Xia, Ziyi and Rui, Jingyang and Gao, Jiechao and Yang, Hongyang and Zhu, Ming and Wang, Christina Dan and Wang, Zhaoran and Guo, Jian},
  booktitle={Advances in Neural Information Processing Systems 35 (NeurIPS 2022) Datasets and Benchmarks Track},
  volume={35},
  year={2022},
  url={https://papers.neurips.cc/paper_files/paper/2022/hash/0bf54b80686d2c4dc0808c2e98d430f7-Abstract-Datasets_and_Benchmarks.html}
}

@article{pippas2024evolution,
  title={The Evolution of Reinforcement Learning in Quantitative Finance: A Survey},
  author={Pippas, Nikolaos and Ludvig, Elliot A. and Turkay, Cagatay},
  journal={ACM Computing Surveys},
  volume={57},
  number={11},
  pages={1--51},
  year={2025},
  doi={10.1145/3733714}
}

@inproceedings{fujimoto2018addressing,
  title={Addressing function approximation error in actor-critic methods},
  author={Fujimoto, Scott and van Hoof, Herke and Meger, David},
  booktitle={International conference on machine learning},
  pages={1587--1596},
  year={2018},
  organization={PMLR},
  url={https://proceedings.mlr.press/v80/fujimoto18a.html}
}

@inproceedings{haarnoja2018soft,
  title={Soft actor-critic: Off-policy maximum entropy deep reinforcement learning with a stochastic actor},
  author={Haarnoja, Tuomas and Zhou, Aurick and Abbeel, Pieter and Levine, Sergey},
  booktitle={International conference on machine learning},
  pages={1861--1870},
  year={2018},
  organization={PMLR},
  url={https://proceedings.mlr.press/v80/haarnoja18b.html}
}

@inproceedings{chen2021randomized,
  title={Randomized ensembled double q-learning: Learning fast without a model},
  author={Chen, Xinyue and Wang, Che and Zhou, Zijian and Ross, Keith},
  booktitle={International Conference on Learning Representations},
  year={2021},
  url={https://openreview.net/forum?id=AY8zfZm0tDd}
}

@article{tran2026hybrid,
  title={Hybrid TD3: Overestimation Bias Analysis and Stable Policy Optimization for Hybrid Action Space},
  author={Tran, Thanh-Tuan and Canh, Thanh Nguyen and Chong, Nak Young and HoangVan, Xiem},
  journal={arXiv preprint arXiv:2603.01302},
  year={2026}
}

@article{mair2020robust,
  title={Robust statistical methods in R using the WRS2 package},
  author={Mair, Patrick and Wilcox, Rand},
  journal={Behavior research methods},
  volume={52},
  number={2},
  pages={464--488},
  year={2020},
  publisher={Springer},
  doi={10.3758/s13428-019-01246-w}
}

@inproceedings{lyle2024normalization,
  title={Normalization and effective learning rates in reinforcement learning},
  author={Lyle, Clare and Zheng, Zeyu and Khetarpal, Khimya and Martens, James and van Hasselt, Hado and Pascanu, Razvan and Dabney, Will},
  booktitle={Advances in Neural Information Processing Systems},
  volume={37},
  year={2024},
  url={https://papers.nips.cc/paper_files/paper/2024/hash/c04d37be05ba74419d2d5705972a9d64-Abstract-Conference.html}
}

@article{jiang2017deep,
  title={A deep reinforcement learning framework for the financial portfolio management problem},
  author={Jiang, Zhengyao and Xu, Dixing and Liang, Jinjun},
  journal={arXiv preprint arXiv:1706.10059},
  year={2017}
}

@article{zhang2019deep,
  title={Deep reinforcement learning for trading},
  author={Zhang, Zihao and Zohren, Stefan and Roberts, Stephen},
  journal={The Journal of Financial Data Science},
  volume={2},
  number={2},
  pages={25--40},
  year={2020},
  doi={10.3905/jfds.2020.1.030}
}

@article{acuto2022variational,
  title={Variational quantum soft actor-critic for robotic arm control},
  author={Acuto, Alberto and Barill{\`a}, Paola and Bozzolo, Ludovico and Conterno, Matteo and Pavese, Mattia and Policicchio, Antonio},
  journal={arXiv preprint arXiv:2212.11681},
  year={2022}
}

@article{lokossou2025quantum,
  title={Quantum deep reinforcement learning for humanoid robot navigation task},
  author={Lokossou, Romerik and Girma, Birhanu Shimelis and Tonguz, Ozan K. and Biyabani, Ahmed},
  journal={arXiv preprint arXiv:2509.11388},
  year={2025}
}

@inproceedings{liu2025quantum_fintech,
  title={Quantum-Enhanced Reinforcement Learning with {LSTM} Forecasting Signals for Optimizing Fintech Trading Decisions},
  author={Liu, Yen-Ku and Pan, Yun-Huei and Lu, Pei-Fan and Tsai, Yun-Cheng and Chen, Samuel Yen-Chi},
  booktitle={Proceedings of the IEEE International Conference on Quantum Computing and Engineering (QCE)},
  pages={235--240},
  year={2025},
  doi={10.1109/QCE65121.2025.10325}
}

@article{hohenfeld2024quantum,
  title={Quantum deep reinforcement learning for robot navigation tasks},
  author={Hohenfeld, Hans and Heimann, Dirk and Wiebe, Felix and Kirchner, Frank},
  journal={IEEE Access},
  volume={12},
  pages={87217--87236},
  year={2024},
  publisher={IEEE},
  doi={10.1109/ACCESS.2024.3417808}
}

@inproceedings{jerbi2023quantum,
  title={Quantum Policy Gradient Algorithms},
  author={Jerbi, Sofiene and Cornelissen, Arjan and Ozols, M{\=a}ris and Dunjko, Vedran},
  booktitle={18th Conference on the Theory of Quantum Computation, Communication and Cryptography (TQC 2023)},
  volume={266},
  pages={13:1--13:24},
  year={2023},
  organization={Schloss Dagstuhl-Leibniz-Zentrum f{\"u}r Informatik},
  doi={10.4230/LIPIcs.TQC.2023.13}
}

@inproceedings{xu2025accelerating,
  title={Accelerating Quantum Reinforcement Learning with a Quantum Natural Policy Gradient Based Approach},
  author={Xu, Yang and Aggarwal, Vaneet},
  booktitle={International Conference on Machine Learning},
  year={2025},
  url={https://proceedings.mlr.press/v267/xu25a.html}
}

@article{perez2020data,
  title={Data re-uploading for a universal quantum classifier},
  author={P{\'e}rez-Salinas, Adri{\'a}n and Cervera-Lierta, Alba and Gil-Fuster, Elies and Latorre, Jos{\'e} I.},
  journal={Quantum},
  volume={4},
  pages={226},
  year={2020},
  publisher={Verein zur F{\"o}rderung des Open Access Publizierens in den Quantenwissenschaften},
  doi={10.22331/q-2020-02-06-226}
}

@inproceedings{martyniuk2024quantum,
  title={Quantum Architecture Search: A Survey},
  author={Martyniuk, Darya and Jung, Johannes and Paschke, Adrian},
  booktitle={Proceedings of the IEEE International Conference on Quantum Computing and Engineering (QCE)},
  pages={1695--1706},
  year={2024},
  doi={10.1109/QCE60285.2024.00198}
}

@article{caro2021encoding,
  title={Encoding-dependent generalization bounds for parametrized quantum circuits},
  author={Caro, Matthias C. and Gil-Fuster, Elies and Meyer, Johannes Jakob and Eisert, Jens and Sweke, Ryan},
  journal={Quantum},
  volume={5},
  pages={582},
  year={2021},
  publisher={Verein zur F{\"o}rderung des Open Access Publizierens in den Quantenwissenschaften},
  doi={10.22331/q-2021-11-17-582}
}

@article{roseler2026how,
  title={How to find expressible and trainable parameterized quantum circuits?},
  author={R{\"o}seler, Peter and Willsch, Dennis and Michielsen, Kristel},
  journal={arXiv preprint arXiv:2603.14451},
  year={2026}
}

@article{sweke2025potential,
  title={Potential and limitations of random Fourier features for dequantizing quantum machine learning},
  author={Sweke, Ryan and Recio-Armengol, Erik and Jerbi, Sofiene and Gil-Fuster, Elies and Fuller, Bryce and Eisert, Jens and Meyer, Johannes Jakob},
  journal={Quantum},
  volume={9},
  pages={1640},
  year={2025},
  publisher={Verein zur F{\"o}rderung des Open Access Publizierens in den Quantenwissenschaften},
  doi={10.22331/q-2025-02-20-1640}
}

@article{cameron2021robust,
  title={Robust and scalable sde learning: A functional perspective},
  author={Cameron, Scott and Cameron, Tyron and Pretorius, Arnu and Roberts, Stephen},
  journal={arXiv preprint arXiv:2110.05167},
  year={2021}
}

@article{duarte2024machine,
  title={Machine Learning for Continuous-Time Finance},
  author={Duarte, Victor and Duarte, Diogo and Silva, Dejanir H.},
  journal={The Review of Financial Studies},
  volume={37},
  number={11},
  pages={3217--3271},
  year={2024},
  doi={10.1093/rfs/hhae043}
}

@article{schulman2017proximal,
  title={Proximal policy optimization algorithms},
  author={Schulman, John and Wolski, Filip and Dhariwal, Prafulla and Radford, Alec and Klimov, Oleg},
  journal={arXiv preprint arXiv:1707.06347},
  year={2017}
}

@inproceedings{mnih2016asynchronous,
  title={Asynchronous methods for deep reinforcement learning},
  author={Mnih, Volodymyr and Badia, Adria Puigdomenech and Mirza, Mehdi and Graves, Alex and Lillicrap, Timothy and Harley, Tim and Silver, David and Kavukcuoglu, Koray},
  booktitle={International conference on machine learning},
  pages={1928--1937},
  year={2016},
  organization={PMLR},
  url={https://proceedings.mlr.press/v48/mniha16.html}
}

@inproceedings{kuznetsov2020controlling,
  title     = {Controlling Overestimation Bias with Truncated Mixture of Continuous Distributional Quantile Critics},
  author    = {Kuznetsov, Arsenii and Shvechikov, Pavel and Grishin, Alexander and Vetrov, Dmitry},
  booktitle = {Proceedings of the 37th International Conference on Machine Learning},
  series    = {Proceedings of Machine Learning Research},
  volume    = {119},
  pages     = {5556--5566},
  year      = {2020},
  publisher = {PMLR},
  doi       = {10.48550/arXiv.2005.04269}
}

@article{pearson1901lines,
  title={LIII. On lines and planes of closest fit to systems of points in space},
  author={Pearson, Karl},
  journal={The London, Edinburgh, and Dublin philosophical magazine and journal of science},
  volume={2},
  number={11},
  pages={559--572},
  year={1901},
  publisher={Taylor \& Francis},
  doi={10.1080/14786440109462720}
}

@article{hornik1989multilayer,
  title={Multilayer feedforward networks are universal approximators},
  author={Hornik, Kurt and Stinchcombe, Maxwell and White, Halbert},
  journal={Neural networks},
  volume={2},
  number={5},
  pages={359--366},
  year={1989},
  publisher={Elsevier},
  doi={10.1016/0893-6080(89)90020-8}
}

@article{hinton2006reducing,
  title={Reducing the dimensionality of data with neural networks},
  author={Hinton, Geoffrey E and Salakhutdinov, Ruslan R},
  journal={science},
  volume={313},
  number={5786},
  pages={504--507},
  year={2006},
  publisher={American Association for the Advancement of Science},
  doi={10.1126/science.1127647}
}

@article{krogh1991simple,
  title={A simple weight decay can improve generalization},
  author={Krogh, Anders and Hertz, John},
  journal={Advances in neural information processing systems},
  volume={4},
  year={1991},
  url={https://proceedings.neurips.cc/paper/1991/hash/8eefcfdf5990e441f0fb6f3fad709e21-Abstract.html}
}

@article{ba2016layer,
  title={Layer normalization},
  author={Ba, Jimmy Lei and Kiros, Jamie Ryan and Hinton, Geoffrey E},
  journal={arXiv preprint arXiv:1607.06450},
  year={2016}
}

@article{miyato2018spectral,
  title={Spectral normalization for generative adversarial networks},
  author={Miyato, Takeru and Kataoka, Toshiki and Koyama, Masanori and Yoshida, Yuichi},
  journal={arXiv preprint arXiv:1802.05957},
  year={2018}
}

@article{choi2018pact,
  title={Pact: Parameterized clipping activation for quantized neural networks},
  author={Choi, Jungwook and Wang, Zhuo and Venkataramani, Swagath and Chuang, Pierce I-Jen and Srinivasan, Vijayalakshmi and Gopalakrishnan, Kailash},
  journal={arXiv preprint arXiv:1805.06085},
  year={2018}
}

@inproceedings{xu2017feature,
  title={Feature Squeezing: Detecting Adversarial Examples in Deep Neural Networks},
  author={Xu, Weilin and Evans, David and Qi, Yanjun},
  booktitle={Network and Distributed System Security Symposium (NDSS)},
  year={2018},
  doi={10.14722/ndss.2018.23198},
  url={https://www.ndss-symposium.org/wp-content/uploads/2018/02/ndss2018_03A-4_Xu_paper.pdf}
}
% ------------------------------------------------------------------------
% Appendix
% --------------------------------------------------------------------
\appendix
% -----------------------------------------------------------
\section{Representation Quality Evaluation}
\label{appendix:encoder_diagnostics}
To verify whether PQC fulfills its core design goal of noise suppression and signal structure preservation, we analyze the variance and VAF of the learned representations. Under identical experimental settings, we extract representations from the same raw input state and report two groups of comparisons: representative spectral/filtering examples, including FPQC-SAC, RFF/Fourier features~\citep{sweke2025potential}, Wavelet filtering~\citep{alrumaih2002wavelet}, and Kalman filtering~\citep{abdulkadir2013unscented}, and the classical bottleneck controls used in Table~\ref{tab:bottleneck_comparison}. Total Latent Variance quantifies the overall intensity of spatial noise perturbations in the representation manifold, while VAF@1 (Variance Accounted For by the 1st PC) assesses structural coherence and the representation's capability to linearly preserve predictable environmental structure, with higher values implying fewer unstructured noise components.

\begin{figure}[htbp]
  \centering
  \includegraphics[width=\textwidth]{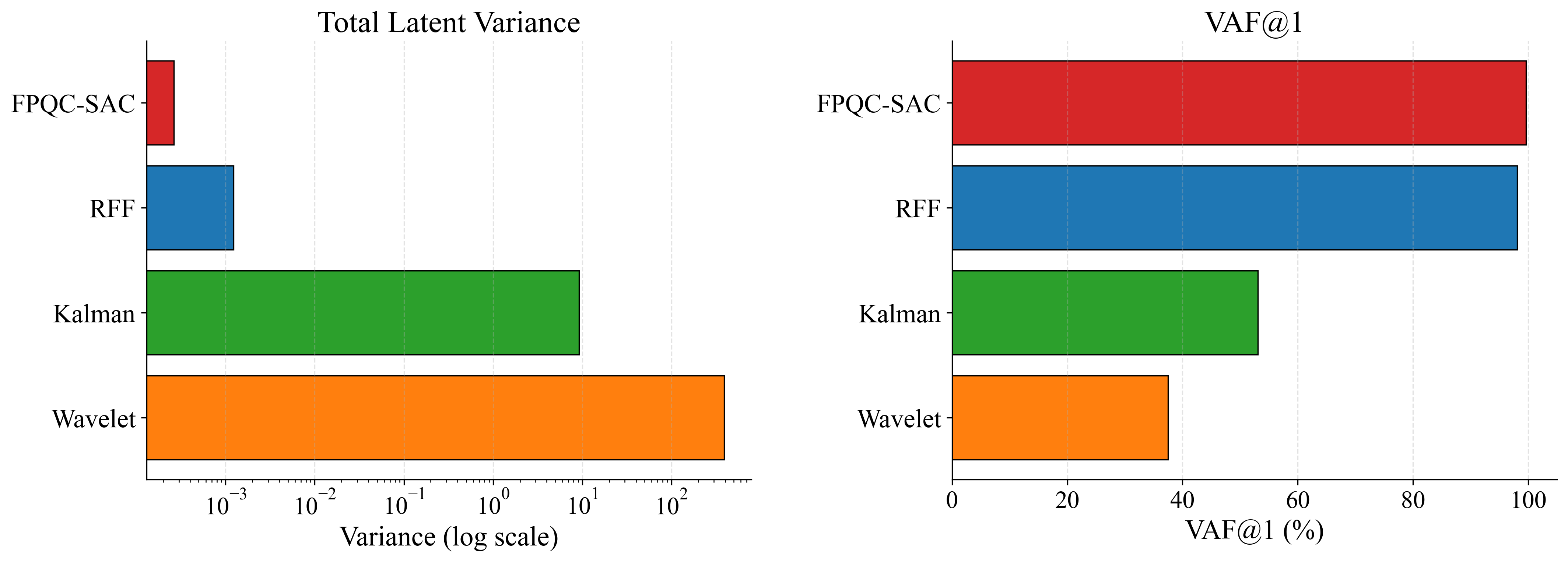}
  \caption{Representation quality for FPQC-SAC, RFF/Fourier features, Wavelet filtering, and Kalman filtering. Lower Total Latent Variance and higher VAF@1 indicate stronger noise suppression and structural coherence.}
  \label{fig:appendix_representation_vaf}
\end{figure}

\begin{figure}[htbp]
  \centering
  \includegraphics[width=\textwidth]{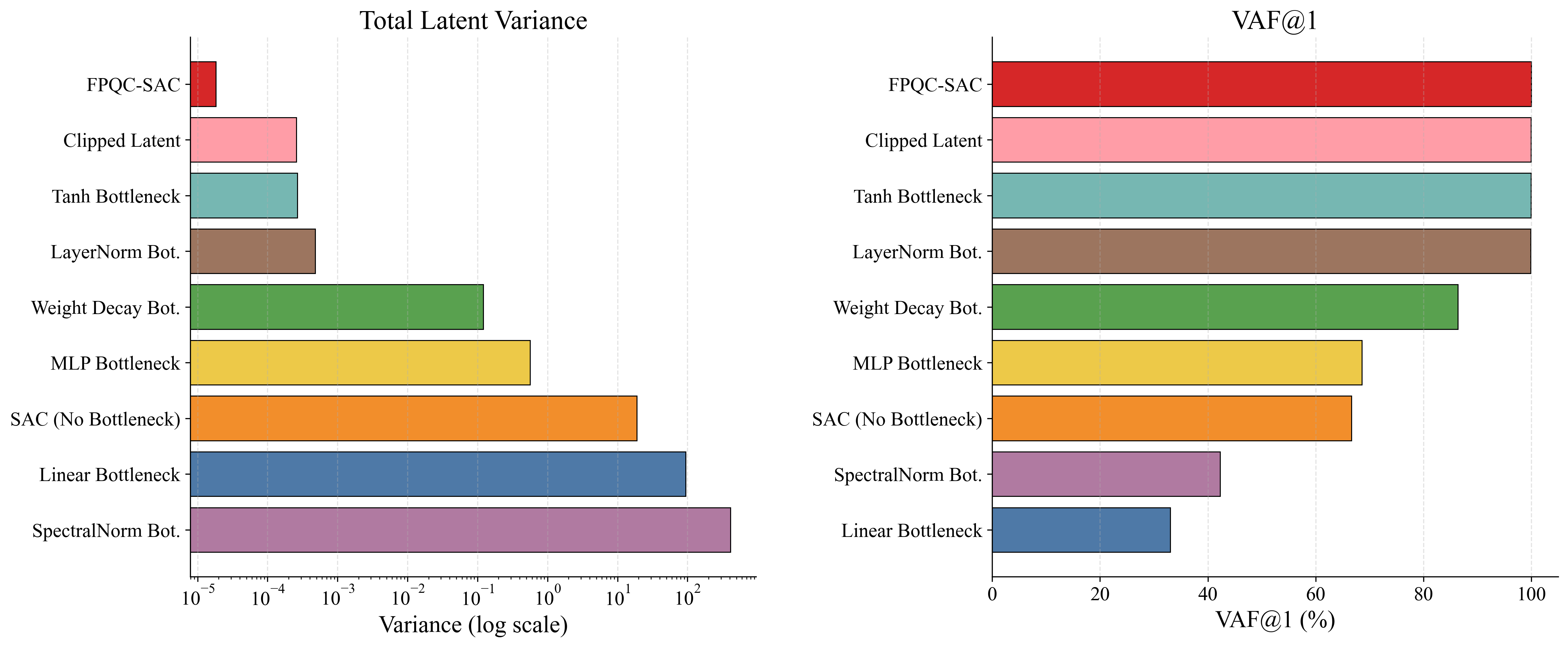}
  \caption{Representation quality for the classical bottleneck controls in Table~\ref{tab:bottleneck_comparison}. Lower Total Latent Variance and higher VAF@1 indicate stronger noise suppression and structural coherence.}
  \label{fig:appendix_bottleneck_representation_vaf}
\end{figure}

\begin{table}[htbp]
  \centering
  \caption{Performance and available Q-value training diagnostics for FPQC-SAC and representative signal-processing encoders.}
  \label{tab:appendix_spectral_diagnostics}
  \small
  \setlength{\tabcolsep}{2.5pt}
  \resizebox{\textwidth}{!}{
  \begin{tabular}{lcccccccc}
    \toprule
    \textbf{Encoder} & \textbf{CR\%$\uparrow$} & \textbf{AR\%$\uparrow$} & \textbf{SR$\uparrow$} & \textbf{Sortino$\uparrow$} & \textbf{Calmar$\uparrow$} & \textbf{Q-gap$\downarrow$} & \textbf{TD-var$\downarrow$} & \textbf{C-loss-var$\downarrow$} \\
    \midrule
    \textbf{FPQC-SAC} & \textbf{195.32$\pm$26.94} & \textbf{50.05$\pm$5.15} & \textbf{1.71$\pm$0.15} & \textbf{2.13$\pm$0.19} & \textbf{1.75$\pm$0.20} & \textbf{$4.55{\times}10^{-4}\pm4.61{\times}10^{-2}$} & \textbf{0.535$\pm$0.69} & \textbf{8.19$\pm$14} \\
    RFF & 162.23$\pm$37.25 & 43.30$\pm$7.76 & 1.53$\pm$0.22 & 1.94$\pm$0.25 & 1.61$\pm$0.29 & $5.79{\times}10^{-4}\pm5.10{\times}10^{-2}$ & 0.552$\pm$0.67 & 8.08$\pm$12 \\
    Wavelet & 138.44$\pm$36.39 & 38.23$\pm$8.00 & 1.49$\pm$0.17 & 1.76$\pm$0.27 & 1.32$\pm$0.35 & -0.252$\pm$0.37 & $7.81{\times}10^2\pm1.74{\times}10^3$ & $2.61{\times}10^3\pm4.96{\times}10^3$ \\
    Kalman & 130.44$\pm$23.78 & 36.66$\pm$5.26 & 1.50$\pm$0.15 & 1.72$\pm$0.23 & 1.23$\pm$0.26 & -0.170$\pm$0.05 & 10.00$\pm$7.62 & 61.09$\pm$95.59 \\
    \bottomrule
  \end{tabular}
  }
\end{table}

\FloatBarrier

\section{Reproducibility Details}
\label{appendix:repro_details}
\subsection{Markov Decision Process (MDP) and Environment Settings}

The trading environment is formulated as a continuous-control Markov Decision Process (MDP). The portfolio consists of 6 highly volatile U.S. assets: AAPL, AMZN, GOOGL, MSFT, QQQ, and SPY.
\begin{table}[htbp]
  \centering
  \caption{State space, action space, and environment configuration.}
  \label{tab:mdp_env_setup}
  \small
  \begin{tabular}{@{}p{0.27\textwidth}p{0.66\textwidth}@{}}
    \toprule
    \textbf{Parameter} & \textbf{Value / Description} \\
    \midrule
    \textbf{State Space Dimension} & $64$ \\
    State Composition & $[1 \text{ Cash}, 6 \text{ Holdings}, 6 \text{ Prices}, 8 \times 6 \text{ Tech Indicators}, 3 \text{ Additional Market Features}]$ \\
    Technical Indicators (8) & MACD, BOLL\_UB, BOLL\_LB, RSI\_30, CCI\_30, DX\_30, SMA\_30, SMA\_60 \\
    State Normalization & None (Raw values are used, with internal cash scaling) \\
    \midrule
    \textbf{Action Space} & Continuous Box space, Dimension $6$ \\
    Action Meaning & Values mapped to $[-1, 1]$. Positive = Buy, Negative = Sell \\
    Maximum Shares ($h_{max}$) & 100 shares per transaction \\
    \midrule
    \multicolumn{2}{@{}l}{\textbf{Trading Parameters}} \\
    Initial Account Balance & \$100,000 \\
    Transaction Fee Rate & 0.01\% ($0.0001$) for both buy and sell orders \\
    Reward Function & 
    \begin{minipage}[t]{\linewidth}
    Stepwise change in portfolio value plus a 10-day look-back Sharpe reward:
    \begin{equation}
    \label{app:eq:reward_function}
    \begin{aligned}
    R_t &= r_t^\text{step} + r_t^\text{lookback}, \\
    r_t^\text{step} &= A_t - A_{t-1}, \\
    r_t^\text{lookback} &=
    \frac{\bar{\Delta A}_{t-10:t}}{\sqrt{\mathrm{Var}(\Delta A_{t-10:t}) + \epsilon}}, \\
    \bar{\Delta A}_{t-10:t} &=
    \frac{1}{10} \sum_{\tau=t-9}^{t} \Delta A_\tau, \quad
    \Delta A_\tau = A_\tau - A_{\tau-1}.
    \end{aligned}
    \end{equation}
    where $\epsilon > 0$ is a small constant to avoid division by zero.
    \end{minipage} \\
    Reward Scaling & $1 \times 10^{-4}$ \\
    \bottomrule
  \end{tabular}
\end{table}
In implementation, this look-back reward adjustment helps the agent consider short-term risk-adjusted performance in addition to immediate asset gains.
\subsection{Quantum Circuit Architecture and Network Topology}

In our FPQC-SAC implementation, the quantum component acts specifically as a \textit{Quantum Feature Extractor} that processes the raw market state before passing the representations to the classical Actor and Critic heads. 

The topology follows a bottleneck structure:
\begin{enumerate}
    \item \textbf{Pre-Net (Classical):} A linear projection with ReLU activation maps the 64-dimensional state to 7 dimensions ($64 \to 7$).
    \item \textbf{PQC Layer (Quantum):} The 7-dimensional vector is encoded into 7 qubits via Angle Embedding ($R_x$ rotations). This is followed by 2 layers of \texttt{BasicEntanglerLayers} (parameterized rotations connected by circular CNOTs). The output is obtained via Pauli-Z expectation values ($\langle Z \rangle$) across all 7 qubits.
    \item \textbf{Post-Net (Classical):} A linear projection with ReLU activation expands the 7-dimensional quantum measurement back to a 64-dimensional feature vector ($7 \to 64$).
    \item \textbf{Actor-Critic Heads:} Standard SB3 MLP networks (two layers of 256 units: $[256, 256]$) process the 64-dimensional feature vector to output policies and Q-values
\end{enumerate}

\begin{table}[htbp]
  \centering
  \caption{Quantum Parameterized Quantum Circuit (PQC) settings.}
  \label{tab:quantum_setup}
  \small
  \begin{tabular}{ll}
    \toprule
    \textbf{Parameter} & \textbf{Configuration} \\
    \midrule
    Number of Qubits ($N$) & 7 \\
    Encoding Strategy & \texttt{qml.AngleEmbedding} \\
    Variational Layers & 2 layers of \texttt{qml.BasicEntanglerLayers} \\
    Observable & Expectation value of Pauli-Z ($\langle Z \rangle$) per qubit \\
    Simulator Backend & \texttt{default.qubit} (PennyLane) \\
    Differentiation Method & Exact backpropagation (\texttt{diff\_method="backprop"}) \\
    Total PQC Trainable Weights & 14 parameters (Shape: $2 \times 7$) \\
    \bottomrule
  \end{tabular}
\end{table}

\subsection{Training Hyperparameters}

The models are trained over the period from \textbf{2013-01-02 to 2018-12-31} and strictly evaluated out-of-sample from \textbf{2019-01-02 to 2021-08-31} ($\sim$672 trading days). Each agent is trained for 25,000 environment steps. All DRL experiments are executed across 20 distinct, fixed random seeds to ensure statistical reliability. 

\begin{table}[htbp]
  \centering
  \caption{Algorithmic hyperparameters for standard and quantum DRL baselines.}
  \label{tab:hyperparameters}
  \small
  \begin{tabular}{ll}
    \toprule
    \textbf{Hyperparameter} & \textbf{Value} \\
    \midrule
    Total Environment Steps & 25,000 \\
    Learning Rate & $3 \times 10^{-4}$ \\
    Batch Size (SAC, DDPG, TD3) & 64 \\
    Replay Buffer Size & 200,000 \\
    Discount Factor ($\gamma$) & 0.99 \\
    Target Smoothing Coefficient ($\tau$) & 0.005 \\
    Entropy Coefficient (SAC/FPQC-SAC) & ``auto'' (dynamically adjusted to target entropy $-6$) \\
    Optimizer & Adam \\
    \bottomrule
  \end{tabular}
\end{table}

% ========================================================================
% Extended Theoretical Analysis and Proofs
% ========================================================================

\section{Extended Theoretical Analysis and Proofs}
\label{app:extended_theory}

\subsection{Proof of Proposition 1: Expectation Attenuation}
\label{app:proof_prop1}

\begin{proof}
Given the effective latent SDE $dZ_s = \mu_Z ds + \sigma_Z dW_s$ (from Lemma \ref{lemma:prenet_jacobian}) over $s \in [t, t+\Delta t]$. The quantum angle encoding maps this state into complex periodic components. Let $v_\omega(Z_s) = \exp(i\omega^\top Z_s)$. Applying It\^o's formula to the complex-valued process $v_\omega$:
\begin{equation}
\label{app:eq:proof_prop1_ito}
dv_\omega(Z_s) = v_\omega(Z_s) \left( i\omega^\top \mu_Z - \frac{1}{2} \omega^\top \Sigma_Z \omega \right) ds + v_\omega(Z_s) i\omega^\top \sigma_Z dW_s
\end{equation}
Taking the expectation $\mathbb{E}[\cdot]$ over local Brownian paths, the It\^o integral with respect to Brownian motion is a martingale and evaluates to zero. Solving the deterministic ordinary differential equation over $\Delta t$ yields:
\begin{equation}
\label{app:eq:proof_prop1_expectation}
    \mathbb{E}[e^{i\omega^\top Z_{t+\Delta t}}] = e^{i\omega^\top Z_t} \exp\left(i \int_t^{t+\Delta t} \omega^\top \mu_Z ds \right) \exp\left(-\frac{1}{2} \int_t^{t+\Delta t} \omega^\top J_h \Sigma_X J_h^\top \omega ds \right)
\end{equation}
This establishes that despite the unconstrained affine mapping $J_h$, the complex phase mathematically enforces an expectation-level exponential attenuation of the diffusion variance.
\end{proof}

This section rigorously formalizes the theoretical framework established in Section~\ref{sec:methodology}, providing end-to-end mathematical proofs connecting the financial It\^o diffusion, the entropy trap, and the quantum representation bounds.
\subsection{Formalizing the Financial Entropy Trap}
\label{app:entropy_trap}

When Q-values are perturbed by environmental noise, the soft value update operator of SAC, $\alpha \log \int \exp(Q/\alpha) da$, systematically converts zero-mean noise into a positive bias. Inspired by the analysis of the log-sum-exp overestimation tendency by Kuznetsov et al. \cite{kuznetsov2020controlling}, we contextualize this mechanism within the bootstrapped financial low-SNR setting, identifying it as the structural root of SAC's systematic failure in such environments.

Let the true action-value be $Q^\star(s, a)$. Due to environmental diffusion, the estimated Q-value contains a zero-mean perturbation: $\tilde{Q}(s, a) = Q^\star(s, a) + \epsilon(s, a)$, with $\mathbb{E}_\epsilon[\epsilon(s, a)] = 0$. The soft value estimate over the continuous action space $\mathcal{A}$ is given by $V(s) = \alpha \log \int_{\mathcal{A}} \exp(\tilde{Q}(s, a)/\alpha) da$. 

Because the continuous $\log \int \exp(\cdot)$ operator is strictly convex, applying Jensen's inequality with respect to the noise distribution yields a strict positive bias:
\begin{equation}
\label{app:eq:entropy_bias}
\begin{aligned}
    \mathbb{E}_\epsilon[V(s)] &= \mathbb{E}_\epsilon \left[ \alpha \log \int_{\mathcal{A}} \exp\left(\frac{Q^\star(s, a) + \epsilon(s, a)}{\alpha}\right) da \right] \\
    &> \alpha \log \int_{\mathcal{A}} \exp\left(\frac{\mathbb{E}_\epsilon[Q^\star(s, a) + \epsilon(s, a)]}{\alpha}\right) da \\
    &= \alpha \log \int_{\mathcal{A}} \exp\left(\frac{Q^\star(s, a)}{\alpha}\right) da \equiv V^\star(s).
\end{aligned}
\end{equation}

Furthermore, assuming the environmental noise $\epsilon(s, a)$ is locally sub-Gaussian with a variance proxy $\sigma^2_\epsilon$, the moment-generating function restricts this overestimation bias to a deterministic upper bound :
\begin{equation}
\label{app:eq:entropy_bias_bound}
    0 < \mathbb{E}_\epsilon[V(s)] - V^\star(s) \le \frac{\sigma^2_\epsilon}{2\alpha}.
\end{equation}

\textbf{Mathematical Implication:} These inequalities establish a harsh reality: \textit{even if the noise is strictly zero-mean, the expected value estimation is deterministically skewed upward.} Within SAC's bootstrapping iterations, this positive bias is repeatedly injected into the target network and continuously amplified, forming the mathematical genesis of the Financial Entropy Trap. The upper bound $\mathcal{O}(\sigma^2_\epsilon / \alpha)$ explicitly dictates that the larger the market diffusion variance $\sigma^2_\epsilon$, the more severe the overestimation; furthermore, as the entropy temperature $\alpha$ decays during training, this explosive bias is further exacerbated.

\textbf{Financial Intuition:} The consequence of this mathematical mechanism can be intuitively understood through a notorious market phenomenon. In a highly uncertain environment, asset price fluctuations inherently lack clear directional signals. However, some traders may misinterpret a random upward noise spike as a genuine trend signal and choose to follow. These follow-up actions artificially drive the price higher, attracting even more participants and forming a self-reinforcing positive feedback loop. 

Yet, when the price inevitably reverts to its fundamentals, the accumulated directional positions reverse almost synchronously. The resulting stampede causes a drawdown far exceeding the initial artificial gains. Late participants not only fail to realize their paper profits but suffer severe real losses. This market dynamic perfectly mirrors the catastrophic policy degradation of an RL agent falling into the Financial Entropy Trap.

\subsection{Pre-Net Jacobian and Local Effective It\^o Diffusion}
\label{lemma:prenet_jacobian}

We use a standard local reduction to connect the raw It\^o state process to the latent diffusion analyzed in Appendix~\ref{app:proof_prop1}. Prior work in continuous-time finance has established the use of neural-network Jacobians and automatic differentiation together with It\^o's lemma for high-dimensional stochastic dynamic systems \citep{duarte2024machine}. Separately, SDE-learning literature has observed that when the transformation network is piecewise linear, such as ReLU networks, the second-order
It\^o correction vanishes almost everywhere, reducing the transformed dynamics to Jacobian-vector products \citep{cameron2021robust}.

Concretely, let the raw state follow
\begin{equation}
\label{app:eq:raw_sde}
dX_t = \mu_X dt + \sigma_X dW_t,
\end{equation}
and let the preprocessed latent state be
\begin{equation}
\label{app:eq:latent_state}
Z_t = h_{\mathrm{pre}}(X_t).
\end{equation}
Since the Pre-Net uses piecewise-linear activations such as ReLU, it is locally affine within each fixed activation region. Away from nonsmooth activation boundaries, the diffusion component is therefore locally propagated through the Pre-Net Jacobian:
\begin{equation}
\label{app:eq:latent_sde}
dZ_t
\approx
J_h(X_t)\mu_X dt + J_h(X_t)\sigma_X dW_t,
\end{equation}
where $J_h(X_t)$ denotes the local Jacobian of $h_{\mathrm{pre}}$ at $X_t$. Consequently,
the local instantaneous covariance of the latent process is
\begin{equation}
\label{app:eq:latent_cov}
\Sigma_Z(X_t)
=
\big(J_h(X_t)\sigma_X\big)\big(J_h(X_t)\sigma_X\big)^\top
=
J_h(X_t)\Sigma_XJ_h(X_t)^\top .
\end{equation}
This step is a standard local piecewise-linear approximation used to identify the effective diffusion covariance entering the PQC encoder. A fully exact treatment of nonsmooth activation boundaries would require generalized It\^o--Tanaka arguments and may introduce local-time correction terms at kink boundaries. Since our subsequent analysis only requires the local covariance propagated into the latent space, we use
$\Sigma_Z = J_h\Sigma_XJ_h^\top$
as the effective latent diffusion covariance.

\subsection{The Role of Entanglement: Cross-Dimensional Feature Crossing and Constructive Interference of Consensus Signals}
\label{app:entanglement_modes}

A Parameterized Quantum Circuit (PQC) without CNOT gates restricts Pauli-Z measurements strictly to separable single-coordinate Fourier modes. In financial markets, this is equivalent to relying on a single technical indicator, which is often noisy and prone to failure. In contrast, CNOT entanglement forces local single-qubit states to bind into multi-body Pauli strings, thereby generating inseparable cross-coordinate feature interactions. This entangled structure acts as an efficient high-dimensional feature crossing mechanism, enabling the model to evaluate joint signals across assets or indicators: only when multiple weak signals form a consistent structural consensus within the entangled state are they amplified through measurement (constructive interference); meanwhile, isolated, uncorrelated random noise cancels out in the complex phase superposition (destructive interference).

To illustrate this mechanism structurally, consider that in a circuit without entanglement, the unitary evolution factorizes completely across qubits:
\begin{equation}
\label{app:eq:separable_unitary}
    U(Z)=\bigotimes_{j=1}^n U_j(Z_j).
\end{equation}
For a local Pauli-Z measurement on qubit $j$, the expectation takes the form
\begin{equation}
\label{app:eq:single_qubit_measurement}
    f_j(Z) = \langle 0| U_j(Z_j)^\dagger Z U_j(Z_j) |0\rangle,
\end{equation}
and therefore depends only on the single latent coordinate $Z_j$. Under angle encoding, its Fourier spectrum is confined to single-coordinate modes such as $\omega=\pm e_j$. From a quantitative trading perspective, this corresponds to the model making decisions based solely on a single isolated indicator (such as an individual MACD or RSI), which frequently fails in low-SNR markets. By Proposition~\ref{prop:ito_attenuation}, these modes undergo expectation attenuation governed by:
\begin{equation}
\label{app:eq:single_mode_attenuation}
    \exp\left( - \frac{1}{2} \int_t^{t+\Delta t} e_j^\top\Sigma_Z e_j ds \right).
\end{equation}
This implies that without entanglement, phase attenuation occurs strictly independently along individual coordinate axes, and traditional linear feature extraction struggles to capture high-order correlations.

Now consider a CNOT gate with control qubit $j$ and target qubit $k$. In the Heisenberg picture:
\begin{equation}
\label{app:eq:cnot_heisenberg}
    \mathrm{CNOT}_{j\to k}^{\dagger} (I\otimes Z_k) \mathrm{CNOT}_{j\to k} = Z_j\otimes Z_k.
\end{equation}
Thus, a local measurement after the entangling layer can correspond to a two-body Pauli string before the entangler. Its expectation contains products of trigonometric phase terms, for example:
\begin{equation}
\label{app:eq:trig_product}
    \cos(Z_j)\cos(Z_k) = \frac{1}{2} \left[ \cos(Z_j+Z_k) + \cos(Z_j-Z_k) \right].
\end{equation}
Through this mechanism, entanglement naturally introduces joint Fourier modes:
\begin{equation}
\label{app:eq:joint_fourier_modes}
    \omega=e_j+e_k, \qquad \omega=e_j-e_k.
\end{equation}
These modes also inherit the attenuation law in Proposition~\ref{prop:ito_attenuation}:
\begin{equation}
\label{app:eq:joint_mode_attenuation}
    \exp\left( - \frac{1}{2} \int_t^{t+\Delta t} (e_j\pm e_k)^\top \Sigma_Z (e_j\pm e_k) ds \right).
\end{equation}
Expanding the quadratic form gives:
\begin{equation}
\label{app:eq:quadratic_form}
    (e_j\pm e_k)^\top \Sigma_Z (e_j\pm e_k) = (\Sigma_Z)_{jj} + (\Sigma_Z)_{kk} \pm 2(\Sigma_Z)_{jk}.
\end{equation}

This demonstrates that entanglement does not merely blindly add more non-linear features. It fundamentally provides an efficient cross-dimensional feature crossing mechanism. When processing financial features, this mechanism exhibits distinctly different filtering properties for "noise" versus "signal":

\textbf{1. Destructive Interference of Isolated Noise (Analogous to Denoising):} 
For independent, isotropic noise lacking intrinsic correlation (where $\Sigma_Z=\sigma^2 I$), the joint mode satisfies:
\begin{equation}
\label{app:eq:isotropic_joint_variance}
    (e_j\pm e_k)^\top\Sigma_Z(e_j\pm e_k) = 2\sigma^2,
\end{equation}
which is larger than the single-coordinate variance $\sigma^2$. This indicates that unstructured random noise lacking correlation will induce more violent phase fluctuations within these entangled cross modes, thereby accelerating attenuation and canceling each other out in the final expectation projection.

\textbf{2. Constructive Interference of Structural Consensus (Analogous to Signal Amplification):}
Conversely, effective trading signals are often hidden within the joint distributions or non-linear combinations of multiple indicators (e.g., price breakouts accompanied by specific volume and volatility contractions). When there is a weak drift structure shared across adjacent latent coordinates (i.e., a genuine market consensus exists among indicators), it induces coherent phase translations in the related qubits. Trainable rotation gates can dynamically adjust the measurement basis so that these structural consensus signals (correlated phase translations) are precisely projected onto selected Pauli readout directions. Along these directions, the phase contributions of multiple weak signals undergo constructive accumulation (constructive interference) and are effectively amplified.

In conclusion, entanglement does not clean up noise in a perfect mathematical sense. Instead, it forces independent indicators to coalesce into multi-body states. This setup not only uses the diffusion attenuation mechanism from Proposition~\ref{prop:ito_attenuation} to keep uncorrelated noise in check, but also adds the ability to capture---and amplify---the structural consensus embedded in financial signals.

\section{Weak-Signal-Preserving Property}
\label{app:snr_bound}
\begin{lemma}[Representation Output SNR Lower Bound]
Consider an environment transition $X \rightarrow X+\delta+\xi$, where
$\delta$ is a weak deterministic drift signal and $\xi$ is high-variance
zero-mean noise. Let
\begin{equation}
\label{app:eq:effective_jacobian}
J_{\phi}^{\mathrm{eff}}(X)
=
\nabla_X \mathbb{E}_{\xi}[\phi(X+\xi)]
\end{equation}
denote the effective local Jacobian of the expectation-level PQC mapping.
If the trained circuit is non-singular along the signal direction such that
\begin{equation}
\label{app:eq:snr_nonsingular}
\sigma_{\min}\!\left(
J_{\phi}^{\mathrm{eff}}\big|_{\mathrm{span}(\delta)}
\right)
\ge c > 0,
\end{equation}
then for any fixed nonzero drift $\delta$, the output signal-to-noise ratio
is lower-bounded by
\begin{equation}
\label{app:eq:snr_lower_bound}
\mathrm{SNR}_{\mathrm{out}}
\ge
\frac{c^2\|\delta\|_2^2}{k}.
\end{equation}
\end{lemma}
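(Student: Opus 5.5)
The plan is to make the output signal-to-noise ratio precise as a ratio of a squared signal response to a total noise variance. The numerator comes from the non-singularity hypothesis, and the denominator from the boundedness of Pauli expectations in Definition~\ref{def:pqc_mapping}. Concretely, I would define the signal response as the shift in the expectation-level map,
\begin{equation*}
S(\delta) = \mathbb{E}_{\xi}[\phi(X+\delta+\xi)] - \mathbb{E}_{\xi}[\phi(X+\xi)],
\end{equation*}
and the noise power as $N = \mathbb{E}_\xi\|\phi(X+\delta+\xi) - \mathbb{E}_\xi[\phi(X+\delta+\xi)]\|_2^2 = \sum_{j=1}^k \mathrm{Var}_\xi(\phi_j(X+\delta+\xi))$. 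I would then set $\mathrm{SNR}_{\mathrm{out}} = \|S(\delta)\|_2^2 / N$, which is the reading that matches the claimed form $c^2\|\delta\|_2^2/k$.

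\textbf{Denominator.} Each coordinate $\phi_j = f_j$ is the expectation of a Pauli observable $\hat O_j$ with spectrum $\{\pm 1\}$, so $\phi_j \in [-1,1]$ for every input, regardless of how large $\xi$ is. Popoviciu's inequality gives $\mathrm{Var}(\phi_j) \le (1-(-1))^2/4 = 1$. Summing over the $k$ readouts yields $N \le k$. This is the step where the unbounded Itô diffusion of Assumption~\ref{ass:ito} is neutralised: the variance bound is path-wise and independent of $\Sigma_X$. It holds equally for the composed map after the Pre-Net, since the PQC output is bounded irrespective of its input.

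\textbf{Numerator.} Write $g(X) = \mathbb{E}_\xi[\phi(X+\xi)]$, so that $S(\delta) = g(X+\delta) - g(X)$ and $J^{\mathrm{eff}}_\phi = \nabla g$. The function $g$ is smooth, because $\phi$ is a trigonometric polynomial in its inputs and differentiation passes under the expectation by dominated convergence (all derivatives are bounded). By the integral form of the mean value theorem,
\begin{equation*}
S(\delta) = \Big(\int_0^1 J^{\mathrm{eff}}_\phi(X+t\delta)\,dt\Big)\delta .
\end{equation*}
If the non-singularity condition (\ref{app:eq:snr_nonsingular}) holds along the segment $[X, X+\delta]$, then $\|S(\delta)\|_2 \ge c\|\delta\|_2$, and combining with $N\le k$ gives the bound (\ref{app:eq:snr_lower_bound}).

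\textbf{Main obstacle.} The averaged Jacobian over the segment need not inherit the lower singular value bound from the pointwise condition at $X$ alone. I would first try to read the hypothesis as holding uniformly on a neighbourhood containing the segment. For a fixed direction $u=\delta/\|\delta\|$, this gives $u^\top$-aligned lower bounds $\|J(X+t\delta)u\| \ge c$. However, the norm of an average can still be smaller than the average of the norms. So I would instead state the result in the weak-drift regime that the lemma targets. There, $S(\delta) = J^{\mathrm{eff}}_\phi(X)\delta + O(\|\delta\|^2)$, with the remainder controlled by the bounded Hessian of $g$ (of order $\max_j\|\omega\|^2$ over the finite Fourier spectrum). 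The bound then holds to leading order in $\|\delta\|$, or exactly whenever the pointwise condition is imposed on the averaged Jacobian. I expect this linearisation and remainder control, rather than the variance bound, to be the delicate part of the argument.
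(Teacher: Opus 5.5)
Your proposal follows the paper's proof. The paper likewise takes $\mathrm{SNR}_{\mathrm{out}}=\|\mathbb{E}[\phi(X+\delta+\xi)]-\mathbb{E}[\phi(X+\xi)]\|_2^2/\operatorname{Tr}(\operatorname{Cov}[\phi])$, bounds the denominator by $k$ using $\phi_j\in[-1,1]$ and Popoviciu's inequality, and gets the numerator from a first-order Taylor step $\approx\|J_{\phi}^{\mathrm{eff}}(X)\delta\|_2^2\ge c^2\|\delta\|_2^2$ without controlling the remainder. Your caveat that the bound holds only to leading order in $\|\delta\|_2$ (or exactly under a condition on the segment-averaged Jacobian) therefore states more precisely what that argument proves; it is not a gap relative to the paper.
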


\begin{proof}
By a first-order Taylor expansion of the expected representation mapping,
the preserved signal magnitude satisfies
\begin{equation}
\label{app:eq:signal_taylor}
\begin{aligned}
\|\Delta \mathbb{E}[\phi]\|_2^2
=
\left\|
\mathbb{E}[\phi(X+\delta+\xi)]
-
\mathbb{E}[\phi(X+\xi)]
\right\|_2^2
\approx
\|J_{\phi}^{\mathrm{eff}}(X)\delta\|_2^2
\ge
c^2\|\delta\|_2^2.
\end{aligned}
\end{equation}

The output noise power is measured by the trace of the covariance across
the representation dimensions:
\begin{equation}
\label{app:eq:noise_power}
\operatorname{Tr}(\operatorname{Cov}[\phi(X+\xi)])
=
\sum_{j=1}^{k}
\operatorname{Var}(\phi_j(X+\xi)).
\end{equation}
By the bounded Pauli readout established in Proposition~\ref{prop:interference_attenuation}, each PQC feature
dimension satisfies $\phi_j(X+\xi)\in[-1,1]$. Therefore, by Popoviciu's
inequality,
\begin{equation}
\label{app:eq:popoviciu_bound}
\operatorname{Var}(\phi_j(X+\xi))
\le
\frac{(1-(-1))^2}{4}
=
1.
\end{equation}
Consequently,
\begin{equation}
\label{app:eq:noise_power_bound}
\operatorname{Tr}(\operatorname{Cov}[\phi(X+\xi)]) \le k.
\end{equation}
The output SNR therefore satisfies
\begin{equation}
\label{app:eq:snr_out_bound}
\begin{aligned}
\mathrm{SNR}_{\mathrm{out}}
=
\frac{\|\Delta \mathbb{E}[\phi]\|_2^2}
{\operatorname{Tr}(\operatorname{Cov}[\phi])}
\ge
\frac{c^2\|\delta\|_2^2}{k}.
\end{aligned}
\end{equation}
This shows that the PQC bottleneck can preserve a nonzero deterministic
signal component along learned non-singular directions, while imposing a
dimension-dependent ceiling on the output noise power.
\end{proof}
\section{Extended Out-of-Sample Extrapolation (2019--2024)}
\label{appendix:diagnostics}

To further evaluate the long-term robustness and generalization capability of the learned policies, we conduct an extended out-of-sample extrapolation experiment. In this setting, the DRL agents, which were trained solely on data up to 2018-12-31, are deployed without any retraining over an extended testing horizon from 2019-01-02 to 2023-12-31. This prolonged period introduces entirely unseen macroeconomic regimes, most notably the sustained market downturn of 2022 and the subsequent recovery phase in 2023.
\begin{figure}[htbp]
  \centering
  \includegraphics[width=\linewidth]{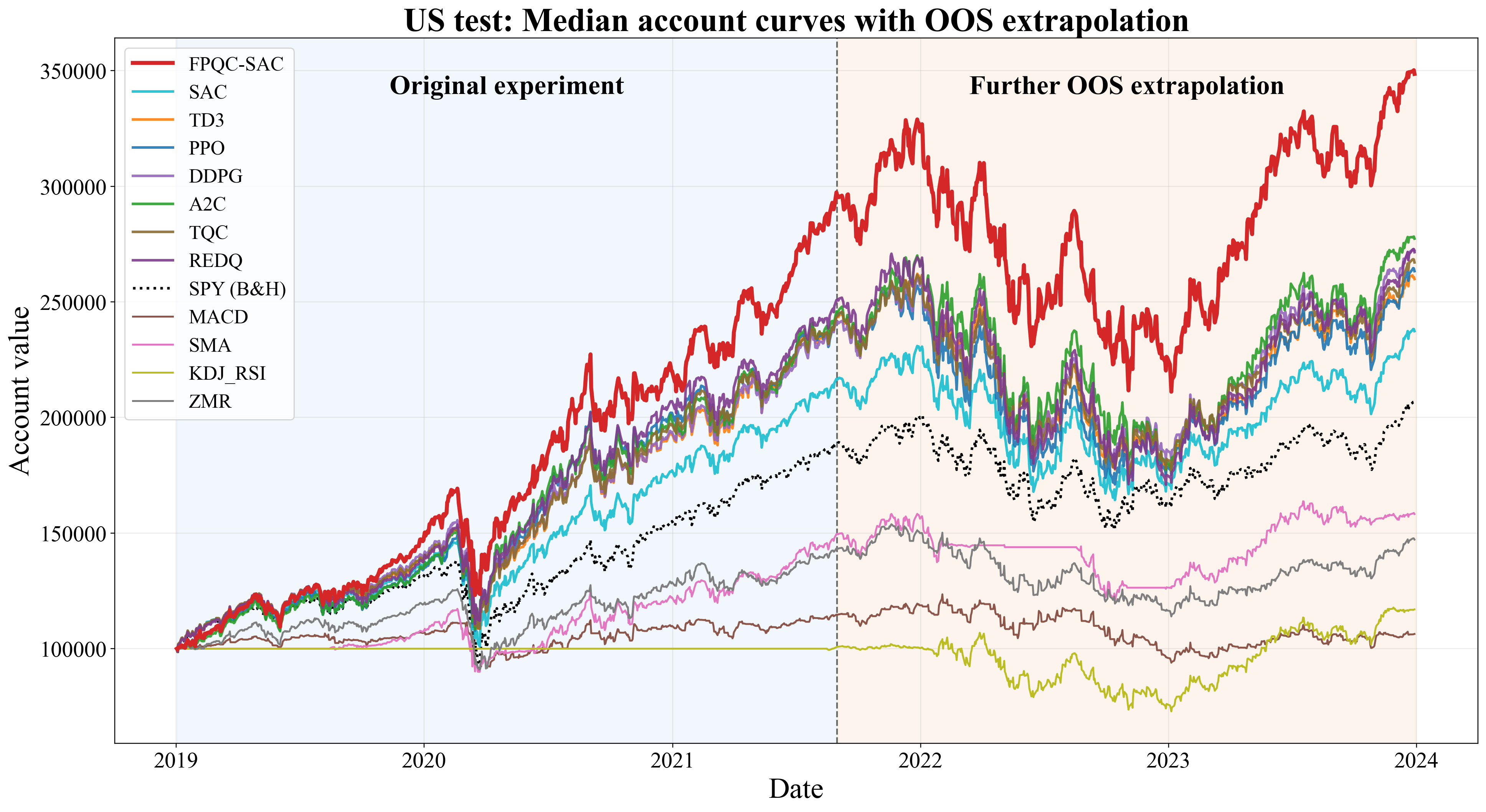}
  \caption{Extended out-of-sample median account value trajectories from 2019 to 2023. The shaded area represents the further extrapolation period beyond the original experiment. FPQC-SAC (red) maintains its relative rank and overall trajectory without structural breakdown despite the lack of retraining over the extended horizon.}
  \label{fig:waituiceshi3}
\end{figure}

Figure~\ref{fig:waituiceshi3} illustrates the median account value trajectories across this extended horizon. As expected, the systemic macroeconomic shifts in 2022 induce drawdowns across all evaluated models. However, FPQC-SAC consistently maintains its relative positioning above the DRL baselines. The policy does not exhibit catastrophic forgetting or structural breakdown over the prolonged out-of-sample period, remaining resilient even when the market environment deviates from the 2013--2018 training distribution.
\section{Statistical Significance Analysis}
\label{sec:significance}

To rigorously evaluate whether the observed performance improvements are statistically reliable, we conduct Welch's t-tests comparing FPQC-SAC against SAC across 20 independent random seeds. The results are detailed in the following table:

\begin{table}[htbp]
  \centering
  \caption{Welch's t-test significance analysis comparing FPQC-SAC against SAC.}
  \small
  \setlength{\tabcolsep}{4pt}
  \resizebox{\textwidth}{!}{
  \begin{tabular}{llccc}
    \toprule
    \textbf{Dataset} & \textbf{Metric} & \textbf{Welch's p-value} & \textbf{Significance Level} & \textbf{Statistical Confidence} \\
    \midrule
    Mainstream Tech & Cumulative Return & 4.14 $\times 10^{-10}$ & *** & >99.99\% \\
    \& Market-Index & Annual Return & 1.11 $\times 10^{-10}$ & *** & >99.99\% \\
    & Sharpe Ratio & 3.74 $\times 10^{-7}$ & *** & >99.99\% \\
    & Sortino Ratio & 3.87 $\times 10^{-9}$ & *** & >99.99\% \\
    & Calmar Ratio & 2.96 $\times 10^{-9}$ & *** & >99.99\% \\
    \midrule
    Defensive Blue-chip & Cumulative Return & 1.87 $\times 10^{-2}$ & * & >98.13\% \\
    (Sub-experiment 1) & Annual Return & 2.06 $\times 10^{-2}$ & * & >97.94\% \\
    & Sharpe Ratio & 4.90 $\times 10^{-2}$ & * & >95.10\% \\
    & Sortino Ratio & 2.83 $\times 10^{-2}$ & * & >97.17\% \\
    & Calmar Ratio & 8.72 $\times 10^{-3}$ & ** & >99.13\% \\
    \midrule
    High-Volatility Growth & Cumulative Return & 4.08 $\times 10^{-2}$ & * & >95.92\% \\
    Portfolio (Sub-experiment 2) & Annual Return & 3.52 $\times 10^{-2}$ & * & >96.48\% \\
    & Sharpe Ratio & 4.48 $\times 10^{-2}$ & * & >95.52\% \\
    & Sortino Ratio & 3.63 $\times 10^{-2}$ & * & >96.37\% \\
    & Calmar Ratio & 3.55 $\times 10^{-2}$ & * & >96.45\% \\
    \bottomrule
  \end{tabular}
  }
\end{table}

\paragraph{Strong statistical significance in volatile regime (Main Experiment).}
On the Mainstream Tech \& Market-Index Portfolio (QQQ, SPY, MSFT, GOOGL, AMZN, AAPL), which represents a technology-heavy and high-volatility regime, FPQC-SAC achieves highly statistically significant improvements across all evaluation metrics. All p-values are well below $10^{-6}$, corresponding to confidence levels exceeding 99.99\% (***). This provides strong evidence that the observed performance gains are remarkably robust and not attributable to random variation, validating the effectiveness of FPQC-SAC in low-SNR environments with strong stochastic perturbations.

\paragraph{Consistent significance in stable regime (Sub-experiment 1).}
In the Defensive Blue-chip Portfolio (Sub-experiment 1), FPQC-SAC also successfully reaches statistical significance across all metrics. The Calmar Ratio shows strong significance ($p < 0.01$), while the Cumulative Return, Annual Return, Sharpe Ratio, and Sortino Ratio all achieve conventional significance thresholds ($p < 0.05$). This is a noteworthy achievement, as stable blue-chip environments typically compress the performance margin between competing methods.

\paragraph{Resilience in extreme volatility (Sub-experiment 2).}
In the High-Volatility Growth Portfolio (Sub-experiment 2), characterized by extreme stochastic noise that typically destabilizes traditional continuous control architectures, FPQC-SAC maintains statistical significance across all evaluation metrics ($p < 0.05$). This confirms that the PQC bottleneck successfully preserves stability and performance even when market variance is severely elevated.

\paragraph{Interpretation and Implications.}
These findings highlight FPQC-SAC's universal robustness across diverse market conditions. In stable, low-variance markets where algorithmic edges are notoriously difficult to prove against static strategies, FPQC-SAC delivers systematically reliable gains. Conversely, in highly volatile regimes where dynamic decision-making is essential but prone to catastrophic variance amplification, the model effectively anchors policy updates. Ultimately, this significance analysis confirms that FPQC-SAC provides a substantial, directionally consistent, and statistically reliable advantage regardless of the underlying market regime.
\section{Evaluation Metrics}
\label{appendix:evaluation_metrics}

To comprehensively assess the trading performance and risk management capabilities of the learned policies, we employ five standard financial evaluation metrics. 

Before defining the metrics, we first formally define the portfolio net return. Let $V_t$ denote the total portfolio value at time step $t$. At each step, the agent executes trading actions that incur a transaction fee rate of $c = 0.01\%$ ($0.0001$) for both buying and selling. The net daily return $R_t$ at step $t$ naturally incorporates this transaction cost and is given by:
\begin{equation}
\label{app:eq:portfolio_net_return}
    R_t = \frac{V_t - V_{t-1}}{V_{t-1}}
\end{equation}
where $V_t$ is the net portfolio value after deducting the transaction fees associated with the generated action $a_t$. Let $T$ be the total number of trading days in the out-of-sample period, and $R_f$ be the daily risk-free return rate (set to zero in our experiments for simplicity). The metrics are defined as follows:

\paragraph{1. Cumulative Return (CR\%)}
Cumulative Return measures the overall percentage increase of the portfolio value from the beginning to the end of the trading period.
\begin{equation}
\label{app:eq:cumulative_return}
    \text{CR} = \frac{V_T - V_0}{V_0} \times 100\%
\end{equation}

\paragraph{2. Annualized Return (AR\%)}
Annualized Return represents the geometric average amount of money earned by the investment each year over a given time period. Assuming 252 trading days in a year, it is calculated as:
\begin{equation}
\label{app:eq:annualized_return}
    \text{AR} = \left( \left(1 + \text{CR}\right)^{\frac{252}{T}} - 1 \right) \times 100\%
\end{equation}

\paragraph{3. Sharpe Ratio (SR)}
The Sharpe Ratio evaluates the risk-adjusted return of the portfolio by penalizing excessive volatility. It is the ratio of the annualized expected excess return to the annualized standard deviation of returns:
\begin{equation}
\label{app:eq:sharpe_ratio}
    \text{SR} = \frac{\sqrt{252} \cdot \mathbb{E}[R_t - R_f]}{\sigma(R_t)}
\end{equation}
where $\mathbb{E}[R_t - R_f]$ is the mean of the daily excess returns, and $\sigma(R_t)$ is the standard deviation of the daily returns.

\paragraph{4. Sortino Ratio (Sortino)}
The Sortino Ratio is a variation of the Sharpe ratio that differentiates harmful volatility from total overall volatility by using the asset's downside deviation. It only penalizes negative returns:
\begin{equation}
\label{app:eq:sortino_ratio}
    \text{Sortino} = \frac{\sqrt{252} \cdot \mathbb{E}[R_t - R_f]}{\sigma_d}
\end{equation}
where $\sigma_d = \sqrt{\mathbb{E}[\min(0, R_t - R_f)^2]}$ is the target downside deviation.

\paragraph{5. Calmar Ratio (Calmar)}
The Calmar Ratio measures the annualized return relative to the Maximum Drawdown (MDD), serving as an indicator of return relative to tail-end downside risk. The MDD represents the largest peak-to-trough drop in the portfolio value:
\begin{equation}
\label{app:eq:max_drawdown}
    \text{MDD} = \max_{\tau \in (0, T)} \left( \frac{\max_{t \in (0, \tau)} V_t - V_\tau}{\max_{t \in (0, \tau)} V_t} \right)
\end{equation}
\begin{equation}
\label{app:eq:calmar_ratio}
    \text{Calmar} = \frac{\text{AR}}{\text{MDD}}
\end{equation}

\section{Limitations and Future Work}
\label{appendix:limitations_future}

While FPQC-SAC demonstrates substantial performance leaps in low-SNR financial environments, we outline the theoretical and computational boundaries of the current framework, which naturally motivate our future research trajectory.

\paragraph{Theoretical Boundaries and Degenerate Market Regimes.} 
The noise-attenuation mechanisms of FPQC-SAC are fundamentally anchored in the assumption that market dynamics exhibit It\^o-diffusion characteristics (as defined in Assumption~\ref{ass:ito}). Specifically, it excels at separating high-variance Brownian perturbations from weak but existing deterministic drift (predictive signals). Consequently, the efficacy of this denoising paradigm may be intrinsically bounded in fully degenerate market regimes---such as assets driven purely by macroscopic behavioral sentiment, meme-stock frenzies, or zero-drift microstructure noise. In such scenarios, the lack of an underlying structural signal renders any signal-preserving operation mathematically less meaningful, as the market itself behaves as pure noise. We view this not as an algorithmic deficiency, but rather as a rigorous reflection of the model's theoretical specificity: FPQC-SAC is designed to extract weak signals where they exist, rather than hallucinate signals where there are none.

\paragraph{Computational Footprint and Scalability.} 
In this study, our empirical validation focuses on highly representative, compact asset portfolios. This design choice is partially motivated by the inherent physical constraints of classical hardware: simulating the entanglement and unitary evolution of high-dimensional quantum states entails an exponentially growing computational footprint. Consequently, extending the current classical-simulation-based FPQC-SAC to massively scaled combinatorial portfolios (e.g., thousands of concurrent assets) presents practical computational bottlenecks. However, our strategic use of the PQC as a \textit{compact bottleneck} elegantly turns this limitation into an advantage, enabling efficient deployment on current classical GPUs without compromising the architectural integrity of the quantum representation.

\paragraph{Future Work: Transitioning to Physical Quantum Hardware.}
The computational bottleneck of simulating quantum mechanics on classical machines is precisely what quantum hardware is designed to resolve. As an exciting and natural progression, our immediate future work aims to deploy the FPQC-SAC architecture onto real-world physical Quantum Processing Units (QPUs). By transitioning from classical simulators to actual quantum hardware, we anticipate bypassing the exponential simulation overhead. This will allow us to scale the qubit register to accommodate massive-scale portfolio management tasks, fully unlocking the intrinsic parallel processing power and natural entanglement dynamics of genuine quantum systems in real-time financial trading.

\section{Broader Impact}
This paper presents work whose goal is to advance the intersection of quantitative finance and artificial intelligence. To ensure reproducibility, we have open-sourced the official implementation at \url{https://github.com/ZeyuLIU-UST/FPQC-SAC-main}. While our primary focus is on algorithmic stability and methodological improvements in reinforcement learning, we acknowledge that deploying automated trading systems in real-world financial markets carries inherent economic implications and market risks, which necessitate careful risk management in practical applications.

\end{document}